\newcommand{\bs}[1]{\boldsymbol{#1}}
\newcommand{\bE}{\mathbb{E}}
\renewcommand{\a}[0]			{ {\bs{\theta}} }
\renewcommand{\vec}[1]			{ \bs{#1} }
\newcommand{\beq}{\begin{equation}}
\newcommand{\eeq}{\end{equation}}
\newcommand{\beqa}{\begin{eqnarray}}
\newcommand{\eeqa}{\end{eqnarray}}
\newcommand{\beqan}{\begin{eqnarray*}}
\newcommand{\eeqan}{\end{eqnarray*}}
\newtheorem{predefinition}{Definition}
\newtheorem{theorem}{Theorem}
\newtheorem{preproposition}{Proposition}
\newenvironment{proposition}[1]
{

\begin{preproposition}
}{
\end{preproposition}
}
\title{Maximum a Posteriori Policy Optimisation}
\author{Abbas Abdolmaleki, Jost Tobias Springenberg, Yuval Tassa, Remi Munos, \\ \textbf{Nicolas Heess, Martin Riedmiller} \\
DeepMind, London, UK \\
\texttt{\{aabdolmaleki,springenberg,tassa,munos,heess,riedmiller\}@google.com}
}
\begin{document}

\maketitle
\begin{abstract}
We introduce a new algorithm for reinforcement learning called Maximum a-posteriori Policy Optimisation (MPO) based on coordinate ascent on a relative-entropy objective. We show that several existing methods can directly be related to our derivation. We develop two off-policy algorithms and demonstrate that they are competitive with the state-of-the-art in deep reinforcement learning. In particular, for continuous control, our method outperforms existing methods with respect to sample efficiency, premature convergence and robustness to hyperparameter settings.
\end{abstract}

\section{Introduction}
Model free reinforcement learning algorithms can acquire sophisticated behaviours by interacting with the environment while receiving simple rewards. Recent experiments \citep{mnih2015human,Jaderberg2016aux, heess2017emergence} successfully combined these algorithms with powerful deep neural-network approximators while benefiting from the increase of compute capacity.

Unfortunately, the generality and flexibility of these algorithms comes at a price: They can require a large number of samples and -- especially in continuous action spaces -- suffer from high gradient variance. Taken together these issues can lead to unstable learning and/or slow convergence. Nonetheless, recent years have seen significant progress, with improvements to different aspects of learning algorithms including stability, data-efficiency and speed,  enabling notable results on a variety of domains, including locomotion \citep{heess2017emergence, Peng16}, multi-agent behaviour \citep{bansal2016multiagent} and classical control \citep{duan2016benchmarking}. 

Two types of algorithms currently dominate scalable learning for continuous control problems: First, Trust-Region Policy Optimisation (TRPO; \citealt{schulman15}) and the derivative family of Proximal Policy Optimisation algorithms (PPO; \citealt{schulman2017ppo}). These policy-gradient algorithms are on-policy by design, reducing gradient variance through large batches and limiting the allowed change in parameters. They are robust, applicable to high-dimensional problems, and require moderate parameter tuning, making them a popular first choice \citep{ho2016gail}. However, as on-policy algorithms, they suffer from poor sample efficiency. 

In contrast, off-policy value-gradient algorithms such as the Deep Deterministic Policy Gradient (DDPG, \citealt{silver2014deterministic,lillicrap2015continuous}), Stochastic Value Gradient (SVG, \citealt{heess2015learning}), and the related Normalized Advantage Function formulation (NAF, \citealt{gu2016continuous}) rely on experience replay and learned (action-)value functions. These algorithms exhibit much better data efficiency, approaching the regime where experiments with real robots are possible \citep{gu2016deep,her2017}. While also popular, these algorithms can be difficult to tune, especially for high-dimensional domains like general robot manipulation tasks.

In this paper we propose a novel off-policy algorithm that benefits from the best properties of both classes. It exhibits the scalability, robustness and hyperparameter insensitivity of on-policy algorithms, while offering the data-efficiency of off-policy, value-based methods.

%Guillaume: not sure I like the rephrased question. I woudl stick to the lower-bound interpretation, which guarantees improvements in the lower
%bound at each iteration of EM, benefits of supervised learning in the M-step, etc.
To derive our algorithm, we take advantage of the duality between control and estimation by using Expectation Maximisation (EM), a powerful tool from the probabilistic estimation toolbox, in order to solve control problems. This duality can be understood as replacing the question ``what are the actions which maximise future rewards?'' with the question ``assuming future success in maximising rewards, what are the actions most likely to have been taken?''.  By using this estimation objective we have more control over the policy change in both E and M steps, yielding robust learning. We show below that several algorithms, including TRPO, can be directly related to this perspective.
We leverage the fast convergence properties of EM-style coordinate ascent by alternating a non-parametric data-based E-step which re-weights state-action samples, with a supervised, parametric M-step using deep neural networks. 

In contrast to typical off-policy value-gradient algorithms, the new algorithm does not require gradient of the Q-function to update the policy. Instead it uses samples from the Q-function to compare different actions in a given state. And subsequently it updates the policy such that better actions in that state will have better probabilities to be chosen.  %This process yields a stable learning algorithm that can learn to adapt full...  covariance matrices, rather than just diagonal, in our policies.

We evaluate our algorithm on a broad spectrum of continuous control problems including a 56 DoF humanoid body. All experiments used the same optimisation hyperparameters~\footnote{With the exception of the number of samples collected between updates.}. Our algorithm shows remarkable data efficiency often solving the tasks we consider an order of magnitude faster than the state-of-the-art. A video of some resulting behaviours can be found here \href{https://youtu.be/he_BPw32PwU}{\texttt{\bfseries youtu.be/he\_BPw32PwU}}.

\section{Background and Notation}

\subsection{Related Work}
Casting Reinforcement Learning (RL) as an inference problem has a long history dating back at least two decades~\citep{dayan97em}. The framework presented here is inspired by a variational inference perspective on RL that has previously been utilised in multiple studies; c.f.~\citet{dayan97em,neumann11variational,deisenroth2013survey,rawlik12,levine2013variational,florensa2017stochastic}. 

Particular attention has been paid to obtaining \textit{maximum entropy} policies as the solution to an inference problem. The penalisation of determinism can be seen encouraging both robustness and simplicity. Among these are methods that perform trajectory optimisation using either linearised dynamics \citep{todorov2008general, toussaint2009robot, levine2013variational} or general dynamics as in path integral control \citep{Kappen05, Theodorou2010}. In contrast to these algorithms, here we do not assume the availability of a transition model and avoid on-policy optimisation. A number of other authors have considered the same perspective but in a model-free RL setting~\citep{neumann11variational,Peters10,florensa2017stochastic,Daniel16} or inverse RL problems~\citep{ziebart2008maximum}. These algorithms are more directly related to our work and can be cast in the same (EM-like) alternating optimisation scheme on which we base our algorithm. However, they typically lack the maximisation (M)-step -- with the prominent exception of REPS, AC-REPS, PI$^2$-GPS and MDGPS~\citep{Peters10,wirth2016model,chebotar2016path,montgomery2016guided} to which our algorithm is closely related as outlined below. An interesting recent addition to these approaches is an EM-perspective on the PoWER algorithm~\citep{LeRoux16a} which uses the same iterative policy improvement employed here, but commits to parametric inference distributions and avoids an exponential reward transformation, resulting in a harder to optimise lower bound.

As an alternative to these policy gradient inspired algorithms, the class of recent algorithms for soft Q-learning (e.g.  \citet{rawlik12,haarnoja2017reinforcement,Fox16} parameterise and estimate a so called ``soft'' Q-function directly, implicitly inducing a maximum entropy policy. A perspective that can also be extended to hierarchical policies \citep{florensa2017stochastic}, and has recently been used to establish connections between Q-learning and policy gradient methods~\citep{odonoghue2016pgq,schulman2017equivalence}. In contrast, we here rely on a parametric policy, our bound and derivation is however closely related to the definition of the soft (entropy regularised) Q-function.

A line of work, that is directly related to the ``RL as inference'' perspective, has focused on using information theoretic regularisers such as the entropy of the policy or the Kullback-Leibler divergence (KL) between policies to stabilise standard RL objectives. In fact, most state-of-the-art policy gradient algorithms fall into this category. For example see the entropy regularization terms used in \citet{mnih2016asynchronous} or the KL constraints employed by work on trust-region based methods \citep{schulman15,schulman2017ppo,Gu17,Ziyu17}. The latter methods introduce a trust region constraint, defined by the KL divergence between the new policy and
the old policy, so that the expected KL divergence over state space is bounded. From the perspective of this paper these trust-region based methods can be seen as optimising a parametric E-step, as in our algorithm, but are ``missing'' an explicit M-step.

Finally, the connection between RL and inference has been invoked to motivate work on exploration. The most prominent examples for this are formed by work on Boltzmann exploration such as \citet{Kaelbling96,Perkins02,sutton1990integrated,odonoghue17uncertainty}, which can be connected back to soft Q-learning (and thus to our approach) as shown in \cite{haarnoja2017reinforcement}.

\subsection{Markov decision Processes}
\label{sec_background}
We consider the problem of finding an optimal policy $\pi$ for a discounted reinforcement learning (RL) problem; formally characterized by a Markov decision process (MDP). The MDP consists of: continuous states $s$, actions $a$, transition probabilities $p(s_{t+1} | s_t, a_t)$ -- specifying the probability of transitioning from state $s_t$ to $s_{t+1}$ under action $a_{t}$ --, a reward function $r(s, a) \in \mathbb{R}$ as well as the discounting factor $\gamma \in [0, 1)$. The policy $\pi(a | s, \a)$ (with parameters $\a$) is assumed to specify a probability distribution over action choices given any state and -- together with the transition probabilities -- gives rise to the stationary distribution $\mu_\pi(s)$.

Using these basic quantities we can now define the notion of a Markov sequence or trajectory $\tau_\pi = \{(s_0, a_0) \dots (s_T, a_T)\}$ sampled by following the policy $\pi$; i.e. $\tau_\pi \sim p_\pi(\tau)$ with $p_\pi(\tau) = p(s_0) \prod_{t>=0} p(s_{t+1} | s_{t}, a_t) \pi(a_t | s_t)$; and the expected return $\bE_{\tau_\pi} \lbrack \sum_{t=0}^\infty \gamma^t r(s_t, s_t) \rbrack$. We will use the shorthand $r_t = r(s_t, a_t)$.

\section{Maximum a Posteriori Policy Optimisation}
\label{sec:MPO}

Our approach is motivated by the well established connection between RL and probabilistic inference. This connection casts the reinforcement learning problem as that of inference in a particular probabilistic model. Conventional formulations of RL aim to find a trajectory that maximizes expected reward. In contrast, inference formulations start from a prior distribution over trajectories, condition a desired outcome such as achieving a goal state, and then estimate the posterior distribution over trajectories consistent with this outcome.

A finite-horizon undiscounted reward formulation can be cast as inference problem by constructing a suitable probabilistic model via a likelihood function $p(O=1|\tau) \propto \exp( \nicefrac{\sum_t r_t}{\alpha} )$, where $\alpha$ is a temperature parameter. Intuitively, $O$ can be interpreted as \textit{the event of obtaining maximum reward by choosing an action}; or the event of succeeding at the RL task~\citep{toussaint2009robot,neumann11variational}. With this definition we can define the following lower bound on the likelihood of optimality for the policy $\pi$:

\begin{align}
\log p_\pi(O=1) &= \log \int p_\pi(\tau) p(O=1|\tau) d \tau
\geq \int q(\tau) \big [ \log p(O=1 | \tau) + \log \frac{p_\pi(\tau)}{q(\tau)} \big] d \tau \\
&=\bE_{q} \Big[ \sum_t r_t / \alpha \Big] - \mathrm{KL} \Big( q(\tau) || p_\pi(\tau) \Big) = \mathcal{J}(q,\pi), \label{eq:ELBO}
%\label{eq:appendix:reg_obj}
\end{align}
where $p_\pi$ is the trajectory distribution induced by policy $\pi(a|s)$ as described in section \ref{sec_background} and $q(\tau)$ is an auxiliary distribution over trajectories that will discussed in more detail below. The lower bound $\mathcal{J}$ is the evidence lower bound (ELBO) which plays an important role in the probabilistic modeling literature. It is worth already noting here that optimizing (\ref{eq:ELBO}) with respect to $q$ can be seen as a KL regularized RL problem.

An important motivation for transforming a RL problem into an inference problem is that this allows us draw from the rich toolbox of inference methods: For instance, $\mathcal{J}$ can be optimized with the familiy of expectation maximization (EM) algorithms which alternate between improving $\mathcal{J}$ with respect to $q$ and $\pi$. In this paper we follow classical \citep{dayan97em} and more recent works  (e.g.\ \citealt{peters2010relative,levine2013variational,Daniel16,wirth2016model}) and cast policy search as a particular instance of this family. Our algorithm then combines properties of existing approaches in this family with properties of recent off-policy algorithms for neural networks.

The algorithm alternates between two phases which we refer to as E and M step in reference to an EM-algorithm. The E-step improves $\mathcal{J}$ with respect to $q$. Existing EM policy search approaches perform this step typically by reweighting trajectories with sample returns \citep{kober2009policy} or via local trajectory optimization \citep{levine2013variational}. We show how off-policy deep RL techniques and value-function approximation can be used to make this step both scalable as well as data efficient. The M-step then updates the parametric policy in a supervised learning step using the reweighted state-action samples from the E-step as targets.

These choices lead to the following desirable properties: (a) low-variance estimates of the expected return via function approximation; (b) low-sample complexity of value function estimate via robust off-policy learning; (c) minimal parametric assumption about the form of the trajectory distribution in the E-step; (d) policy updates via supervised learning in the M step; (e) robust updates via hard trust-region constraints in both the E and the M step.

\subsection{Policy Improvement}
\label{sec:MPO:PI}
The derivation of our algorithm then starts from the infinite-horizon analogue of the KL-regularized expected reward objective from Equation \eqref{eq:ELBO}. 
In particular, we consider variational distributions $q(\tau)$ that factor in the same way as $p_\pi$, i.e.\ $q(\tau)=p(s_0)\prod_{t > 0} p(s_{t+1}|s_t,a_t) q(a_t|s_t)$ which yields: 

\begin{align}
\mathcal{J}(q, \a) = \bE_{q} \Big\lbrack \sum_{t=0}^\infty \gamma^t \big[ r_t - \alpha \textrm{KL}\big(q(a | s_t) \| \pi(a | s_t, \a) \big) \big] \Big\rbrack + \log p(\a). \label{obj_rl}
\end{align}

Note that due to the assumption about the structure of $q(\tau)$ the KL over trajectories decomposes into a KL over the individual state-conditional action distributions. This objective has also been considered e.g.\ by \cite{haarnoja2017reinforcement,schulman2017equivalence}. The additional $\log p(\a)$ term is a prior over policy parameters and can be motivated by a maximum a-posteriori estimation problem (see appendix for more details).

We also define the regularized Q-value function associated with \eqref{obj_rl} as
\begin{equation}
Q^q_\theta(s,a) = r_0 + \bE_{q(\tau), s_0 = s, a_0 = a} \left [ \sum_{t\geq 1}^\infty \gamma^t \big[r_t - \alpha \textrm{KL}(q_t \| \pi_t) \big] \right ],
\end{equation}
with $\textrm{KL}\big(q_t || \pi_t\big) = \textrm{KL}\big(q(a | s_t) \big) \| \pi(a | s_t, \a) \big)$.
Note that $\textrm{KL}\big(q_0 || \pi_0\big)$ and $p(\theta)$ are not part of the Q-function as they are not a function of the action.

We observe that optimizing $\mathcal{J}$ with respect to $q$ is equivalent to solving an expected reward RL problem with augmented reward $\tilde{r}_t = r_t - \alpha  \log \frac{ q(a_t | s_t)}{\pi(a_t | s_t, \a)}$. In this view $\pi$ represents a default policy towards which $q$ is regularized -- i.e. the current best policy. The MPO algorithm treats $\pi$ as the primary object of interest. In this case $q$ serves as an auxiliary distribution that allows optimizing $\mathcal{J}$ via alternate coordinate ascent in $q$ and $\pi_\a$, analogous to the expectation-maximization algorithm in the probabilistic modelling literature. In our case, the E-step optimizes $\mathcal{J}$ with respect to $q$ while the M-step optimizes $\mathcal{J}$ with respect to $\pi$. Different optimizations in the E-step and M-step lead to different algorithms. In particular, we note that for the case where $p(\a)$ is an uninformative prior a variant of our algorithm has a monotonic improvement guarantee as show in the Appendix \ref{sect:proof}.

\subsection{E-Step}
In the E-step of iteration $i$ we perform a partial maximization of $\mathcal{J}(q, \a)$ with respect to $q$ given $\theta = \theta_i$. We start by setting $q = \pi_{\a_i}$ and estimate the unregularized action-value function:
\begin{equation}
Q^q_{\a_i}(s,a) = Q_{\a_i}(s,a) =
\bE_{\tau_{\pi_i}, s_0 = s, a_0 =a } \left [\sum^\infty_t \gamma^t r_t \right],
\label{eq:q_one_step}
\end{equation}
since $\mathrm{KL}(q || \pi_i) = 0$. In practice we estimate $Q_{\a_i}$ from off-policy data (we refer to Section \ref{sec_policy_eval} for details about the policy evaluation step). This greatly increases the data efficiency of our algorithm.
Given $Q_{\a_i}$ we improve the lower bound $\mathcal{J}$ w.r.t. $q$ by first expanding $Q_{\a_i}(s,a)$ via the regularized Bellman operator $T^{\pi, q} = \bE_{q(a | s)} \Big[ r(s, a) - \alpha \textrm{KL}(q \| \pi_i ) + \gamma \bE_{p(s' | s, a)}[V_{\a_i}(s')]]$, 
and optimize the ``one-step'' KL regularised objective

\begin{equation}
\begin{aligned}
 \max_q ~\mathcal{\bar{J}}_s(q, \theta_i) &= \max_q T^{\pi, q} Q_{\a_i}(s,a) \\ &= \max_q \bE_{\mu(s)} \Big[ \bE_{q(\cdot|s)}[Q_{\a_i}(s,a)]  - \alpha \textrm{KL}(q \| \pi_i ) \Big],
 %\max_q \bE_{\mu_q(s)} \Big[ \bE_{q(a|s)}[Q_\a(s,a)] - \alpha \textrm{KL}(q \| \pi_i ) \Big] \\
%= & \max_q \int \mu_q(s)\int q(a|s) Q_{\theta_i}(s,a)dads - \alpha \int \mu_q(s) \textrm{KL}(q(a|s) , \pi(a|s,\a_i))ds.
\end{aligned} 
\label{eq:objective_decomposed}
\end{equation}
since $V_{\a_i}(s) = \bE_{q(a | s)} [Q_{\a_i}(s, a)]$ and thus $Q_{\a_i}(s, a) = r(s, a) + \gamma V_{\a_i}(s)$.

Maximizing Equation~\eqref{eq:objective_decomposed}, thus obtaining $q_i=\arg \max \bar{\mathcal{J}}(q,\theta_i)$, does not fully optimize $\mathcal{J}$ since we treat 
%$\log p(O|a,s,\a_i)$ 
$Q_{\theta_i}$
as constant with respect to $q$. An intuitive interpretation $q_i$ is that it chooses the soft-optimal action for one step and then resorts to executing policy $\pi$. In the language of the EM algorithm this optimization implements a partial E-step. In practice we also choose $\mu_q$ to be the stationary distribution as given through samples from the replay buffer. 

\subsubsection*{Constrained E-step}

The reward and the KL terms are on an arbitray relative scale. This can make it difficult to choose $\alpha$. We therefore replace the soft KL regularization with a hard constraint with parameter $\epsilon$, i.e,
\begin{equation}
  \begin{aligned}
  & \max_q \bE_{\mu(s)} \Big[ \bE_{q(a|s)} \Big[ Q_{\theta_i}(s,a) \Big] \Big] \\
  & s.t. \bE_{\mu(s)} \Big[ \textrm{KL}(q(a|s) , \pi(a|s,\a_i) ) \Big] < \epsilon.
  \end{aligned}
  \label{eq:e_step_hard}
\end{equation}

If we choose to explicitly parameterize $q(a | s)$ -- option 1 below --
the resulting optimisation is similar to that performed by the recent TRPO algorithm for continuous control \citep{schulman15}; only in an off-policy setting. Analogously, the unconstrained objective \eqref{eq:objective_decomposed} is similar to the objective used by PPO \citep{schulman2017ppo}. We note, however, that the KL is reversed when compared to the KL used by TRPO and PPO. 

To implement (\ref{eq:e_step_hard}) we need to choose a form for the variational policy $q(a|s)$. Two options arise: 
\begin{enumerate}
    \item We can use a parametric variational distribution $q(a | s, \a^q)$, with parameters $\a^q$, and optimise Equation \eqref{eq:e_step_hard} via the likelihood ratio or action-value gradients. This leads to an algorithm similar to TRPO/PPO and an explicit M-step becomes unnecessary (see.~Alg.~\ref{alg:MPONON}).

    \item We can choose a non-parametric representation of $q(a | s)$ given by sample based distribution over actions for a state $s$. To achieve generalization in state space we then fit a parametric policy in the M-step. This is possible since in our framework the optimisation of Equation \eqref{eq:e_step_hard} is only the first step of an EM procedure and we thus do not have to commit to a parametric distribution that generalises across the state space at this point.
\end{enumerate}

Fitting a parametric policy in the M-step is a supervised learning problem, allowing us to employ various regularization techniques at that point. It also makes it easier to enforce the hard KL constraint.

\subsubsection*{Non parametric variational distribution}
In the non-parametric case
we can obtain the optimal sample based $q$ distribution over actions for each state -- the solution to Equation \eqref{eq:e_step_hard} -- in closed form (see the appendix for a full derivation), as,

\begin{equation}
q_i(a|s) \propto \pi(a|s,\a_i)\exp\Big( \frac{Q_{\theta_i}(s,a)}{\eta^{*}}\Big),
\end{equation}
where we can obtain $\eta^{*}$ by minimising the following convex dual function,
\begin{equation}
  g(\eta) = \eta \epsilon + \eta \int \mu(s) \log \int \pi(a|s, \a_i)\exp\Big( \frac{Q_{\theta_i}(s,a)}{\eta}\Big) da ds,
  \label{eq:dual}
\end{equation}
after the optimisation of which we can evaluate $q_i(a | s)$ on given samples.

This optimization problem is similar to the one solved by relative entropy policy search (REPS) \citep{Peters10} with the difference that we optimise only for the conditional variational distribution $q(a | s)$ instead of a joint distribution $q(a, s)$ -- effectively fixing $\mu_q(s)$ to the stationary distribution given by previously collected experience -- and we use the Q function of the old policy to evaluate the integral over $a$. While this might seem unimportant it \emph{is crucial} as it allows us to estimate the integral over actions with multiple samples without additional environment interaction. This greatly reduces the variance of the estimate and allows for fully off-policy learning at the cost of performing only a partial optimization of $\mathcal{J}$ as described above.

\subsection{M-step}

Given $q_i$ from the E-step we can optimize the lower bound $\mathcal{J}$ with respect to $\a$ to obtain an updated policy $\a_{i+1} = \arg \max_{\a} \mathcal{J}(q_i, \a)$. Dropping terms independent of $\a$ this entails solving for the solution of 

\begin{equation}
  \max_\a \mathcal{J}(q_i, \theta) = \max_\a \bE_{\mu_q(s)} \Big[ \bE_{q(a|s)} \Big[ \log \pi(a|s,\a) \Big] \Big] + \log p(\a),
\end{equation}

which corresponds to a weighted maximum a-posteriroi estimation (MAP) problem where samples are weighted by the variational distribution from the E-step.
% Guillaume: the following corresponds to a natural gradient update. Cite
% [1] Shun-ichi Amari. Natural gradient works efficiently in learning. Neural Computation, 1998.
Since this is essentially a supervised learning step we can choose any policy representation in combination with any prior for regularisation.
In this paper we set $p(\a)$ to a Gaussian prior around the current policy, i.e, $p(\a) \approx \mathcal{N}\Big(\mu = \a_i , \Sigma =  \frac{F_{\a_i}}{\lambda}\Big),$
where $\a_i$ are the parameters of the current policy distribution, $F_{\a_i}$ is the empirical Fisher information matrix and $\lambda$ is a positive scalar.

As shown in the appendix this suggests the following generalized M-step:

\begin{equation}
\max_\pi \bE_{\mu_q(s)} \Big[ \bE_{q(a|s)} \Big[ \log \pi(a|s,\a) \Big]  - \lambda \textrm{KL}\Big(\pi(a|s,\a_i) , \pi(a|s,\a) \Big)  \Big]
%\max_\pi \int \mu_{q_(s)} \int q_i(a|s)\log \pi(a|s,\a)dads - \lambda \int \mu_q(s) \textrm{KL}(\pi(a|s,\a_i) , \pi(a|s,\a) ) ds,
\label{eq:m_step_soft}
\end{equation}

which can be re-written as the hard constrained version:
 
\begin{equation}
\begin{aligned}
\max_\pi \bE_{\mu_q(s)} \Big[ \bE_{q(a|s)} \Big[ \log \pi(a|s,\a) \Big] \Big] \\
s.t.\ \bE_{\mu_q(s)} \Big[ \textrm{KL}(\pi(a|s,\a_i) , \pi(a|s,\a) ) \Big] < \epsilon.
\end{aligned}
\label{eq:m_step_constraint}
\end{equation}

This additional constraint minimises the risk of overfitting the samples, i.e. it helps us to obtain a policy that generalises beyond the state-action samples used for the optimisation. In practice we have found the KL constraint in the M step to greatly increase stability of the algorithm. 
We also note that in the E-step we are using the reverse, mode-seeking, KL while in the M-step we are using the forward, moment-matching, KL which reduces the tendency of the entropy of the parametric policy to collapse. This is in contrast to other RL algorithms that use M-projection without KL constraint to fit a parametric policy~\citep{Peters10,wirth2016model,chebotar2016path,montgomery2016guided}.  Using KL constraint in M-step has also been shown effective for stochastic search algorithms~\citep{abdolmaleki2017TRCMA}.  

\section{Policy Evaluation}
\label{sec_policy_eval}
Our method is directly applicable in an off-policy setting. For this, we have to rely on a stable policy evaluation operator to obtain a parametric representation of the Q-function $Q_\theta(s,a)$. We make use of the policy evaluation operator from the Retrace algorithm \cite{munos2016safe}, which we found to yield stable policy evaluation in practice\footnote{We note that, despite this empirical finding, Retrace may not be guaranteed to be stable with function approximation \citep{Touati17}.}. 
Concretely, we fit the Q-function $Q_{\theta_i}(s, a, \phi)$ as represented by a neural network, with parameters $\phi$, by minimising the squared loss:

\begin{equation}
\begin{aligned}
&\min_\phi L(\phi) = \min_\phi \bE_{\mu_b(s), b(a|s)} \Big[ \big( Q_{\theta_i}(s_t, a_t, \phi) - Q^{\text{ret}}_t \big)^2 \Big], \text{with } \\ 
  &Q^{\text{ret}}_t = Q_{\phi'}(s_t, a_t)+ \sum_{j=t}^\infty \gamma^{j-t} \Big( \prod_{k=t+1}^j c_k \Big) \Big[ r(s_j, a_j) + \bE_{\pi(a | s_{j+1})} [ Q_{\phi'}(s_{j+1}, a) ] - Q_{\phi'}(s_j, a_j) \Big], \\
  &c_k = \min\Big(1, \frac{\pi(a_k | s_k)}{b(a_k | s_k)}\Big),
\end{aligned}
\end{equation}
where $Q_{\phi'}(s, a)$ denotes the output of a target Q-network, with parameters $\phi'$, that we copy from the current parameters $\phi$ after each M-step. We truncate the infinite sum after $N$ steps by bootstrapping with $Q_{\phi'}$ (rather than considering a $\lambda$ return). Additionally, $b(a|s)$ denotes the probabilities of an arbitrary behaviour policy. In our case we use an experience replay buffer and hence $b$ is given by the action probabilities stored in the buffer; which correspond to the action probabilities at the time of action selection. 

\section{Experiments}
For our experiments we evaluate our MPO algorithm across a wide range of tasks. Specifically, we start by looking at the continuous control tasks of the DeepMind Control Suite (\cite{tassa2018suite}, see Figure \ref{fig:images}), and then consider the challenging parkour environments recently published in \cite{heess2017emergence}. In both cases we use a Gaussian distribution for the policy whose mean and covariance are parameterized by a neural network (see appendix for details). In addition, we present initial experiments for discrete control using ATARI environments using a categorical policy distribution (whose logits are again parameterized by a neural network) in the appendix.

\subsection{Evaluation on control suite}

\begin{figure*}[ht]
\centering
\begin{minipage}[c]{1\textwidth}
\def\mywidth{0.14}
\def\myhsep{-0.01}
\includegraphics[width=\mywidth\textwidth]{./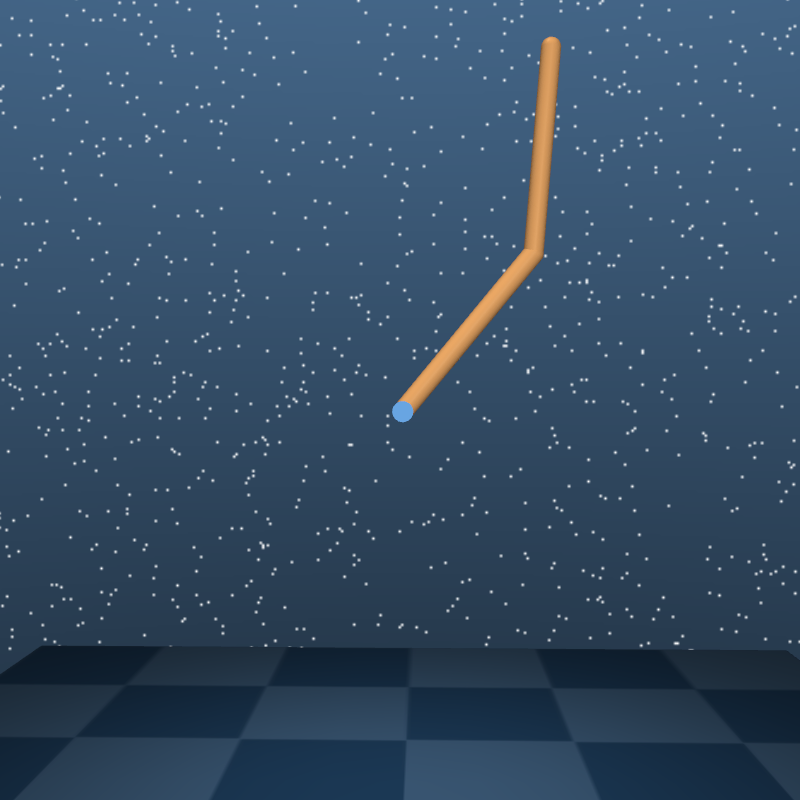}
\hspace{\myhsep\textwidth}
\includegraphics[width=\mywidth\textwidth]{./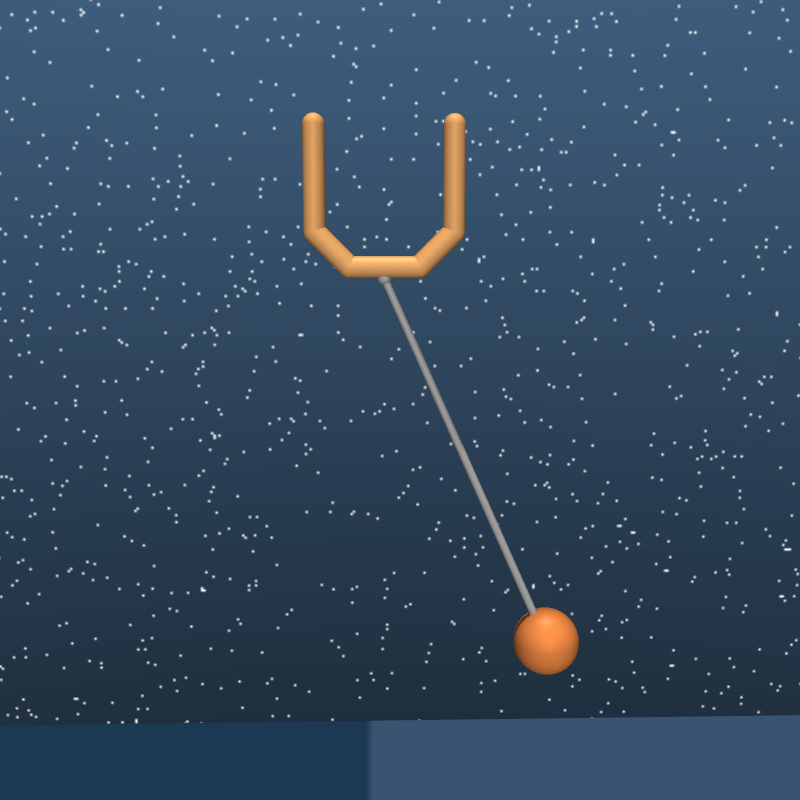}
\hspace{\myhsep\textwidth}
\includegraphics[width=\mywidth\textwidth]{./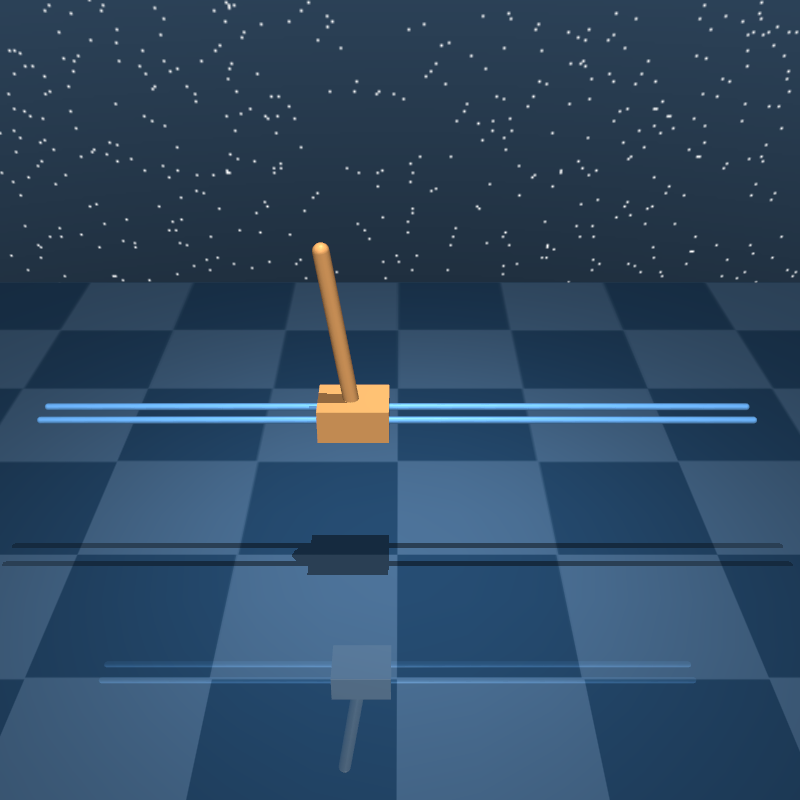}
\hspace{\myhsep\textwidth}
\includegraphics[width=\mywidth\textwidth]{./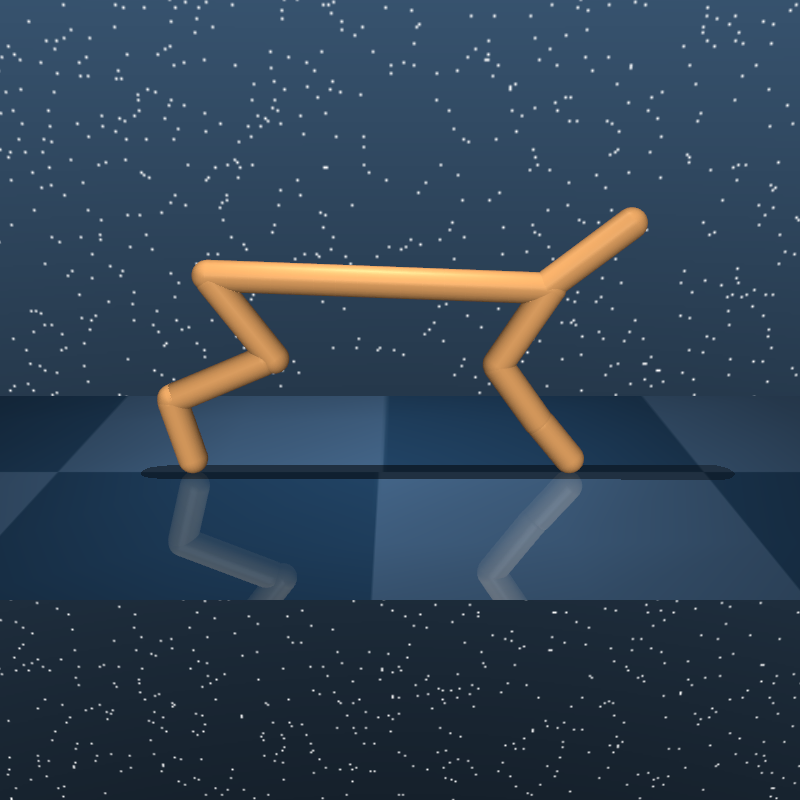}
\hspace{\myhsep\textwidth}
\includegraphics[width=\mywidth\textwidth]{./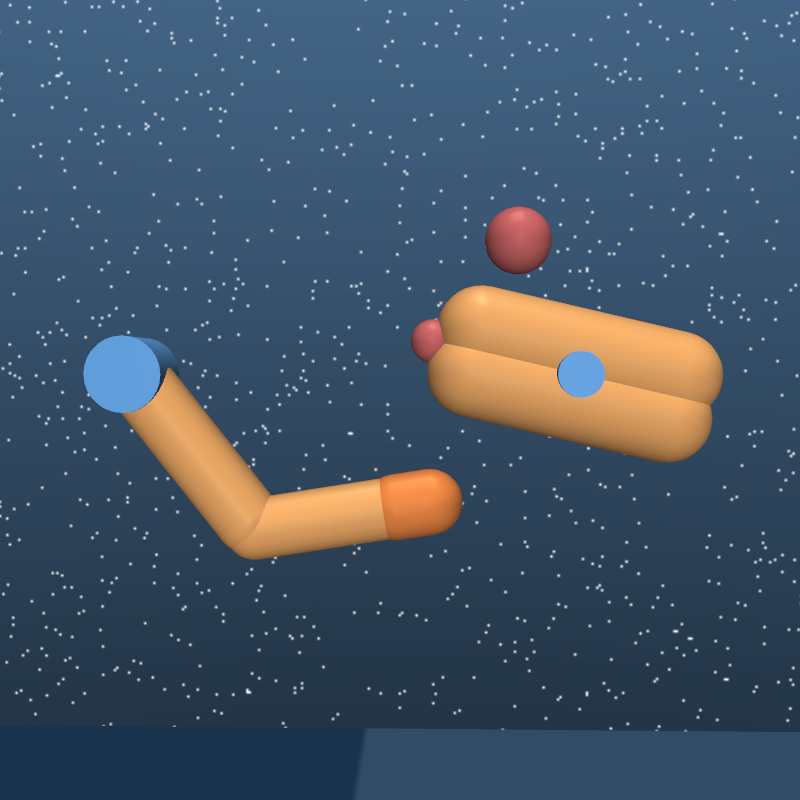}
\hspace{\myhsep\textwidth}
\includegraphics[width=\mywidth\textwidth]{./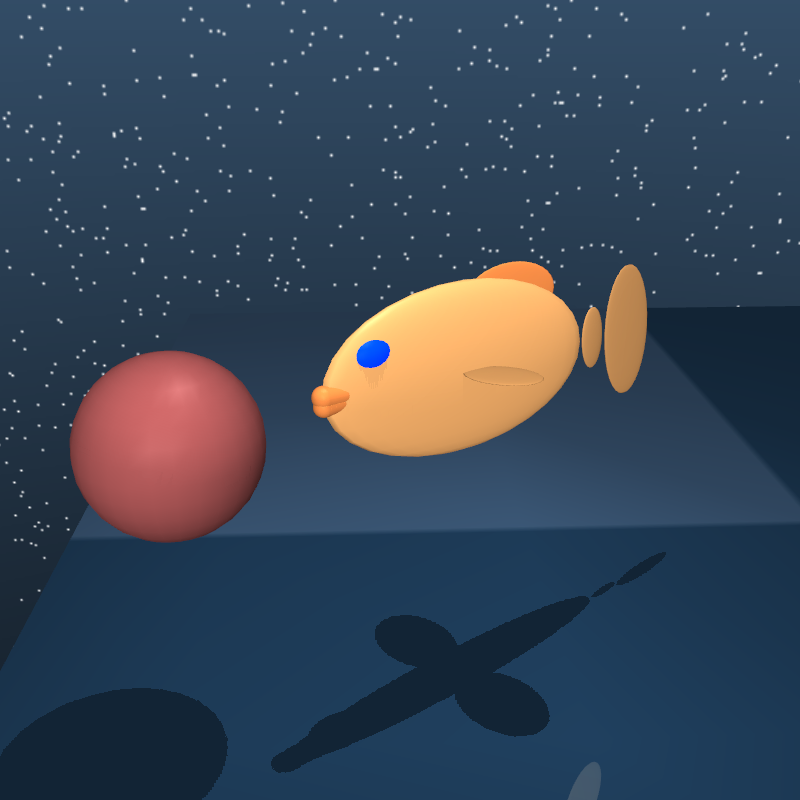}
\hspace{\myhsep\textwidth}
\includegraphics[width=\mywidth\textwidth]{./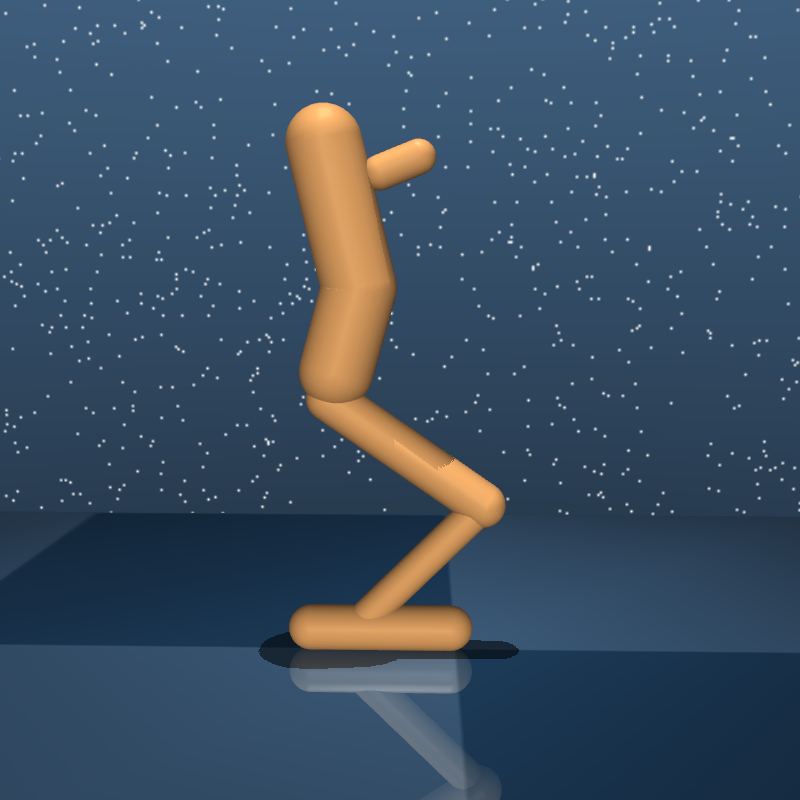}\\
\vspace{-0.027\textwidth}

\includegraphics[width=\mywidth\textwidth]{./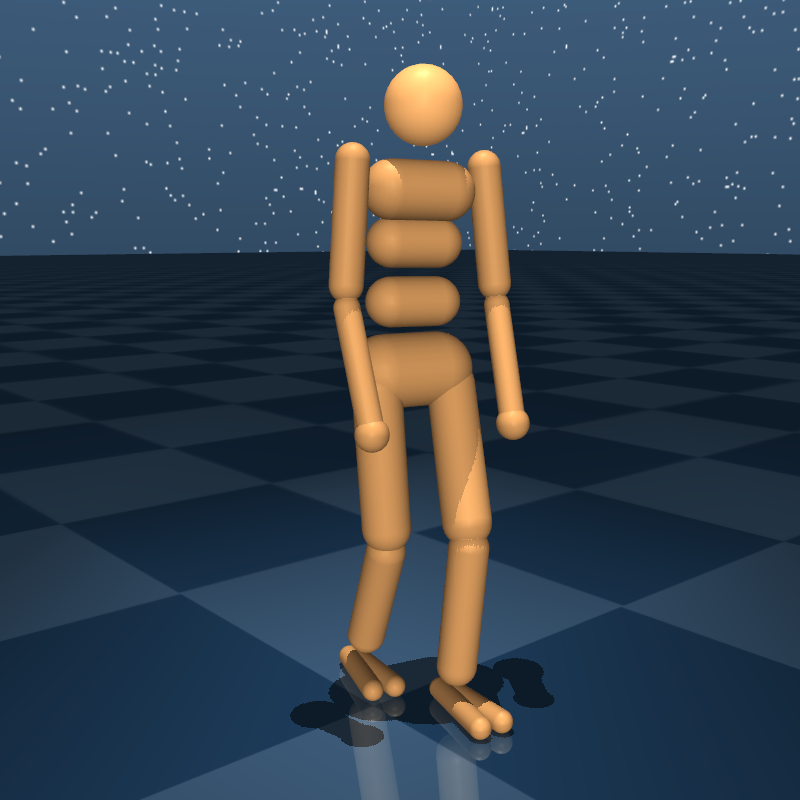}
\hspace{\myhsep\textwidth}
\includegraphics[width=\mywidth\textwidth]{./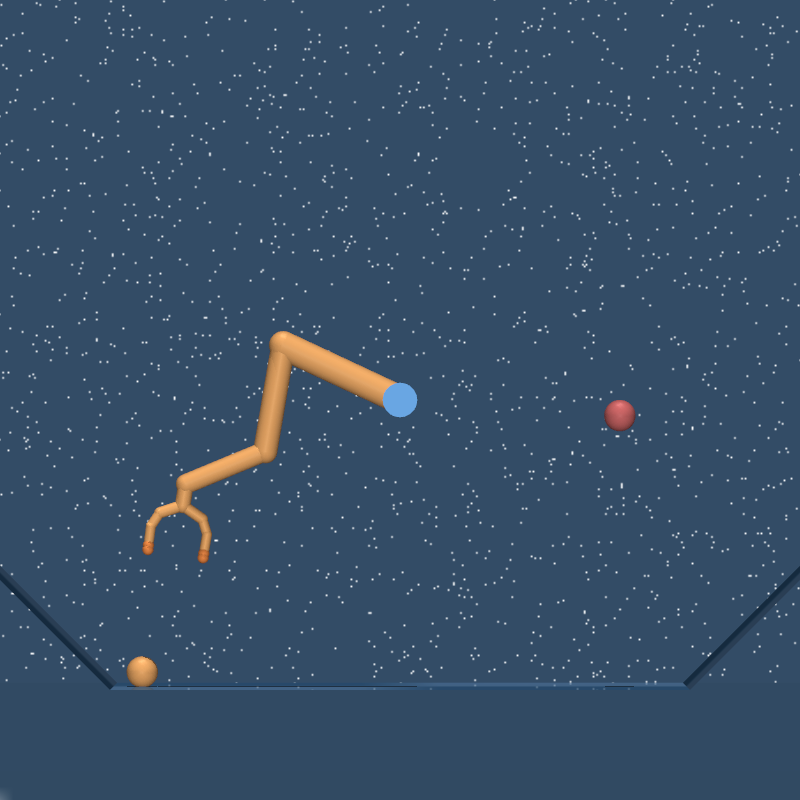}
\hspace{\myhsep\textwidth}
\includegraphics[width=\mywidth\textwidth]{./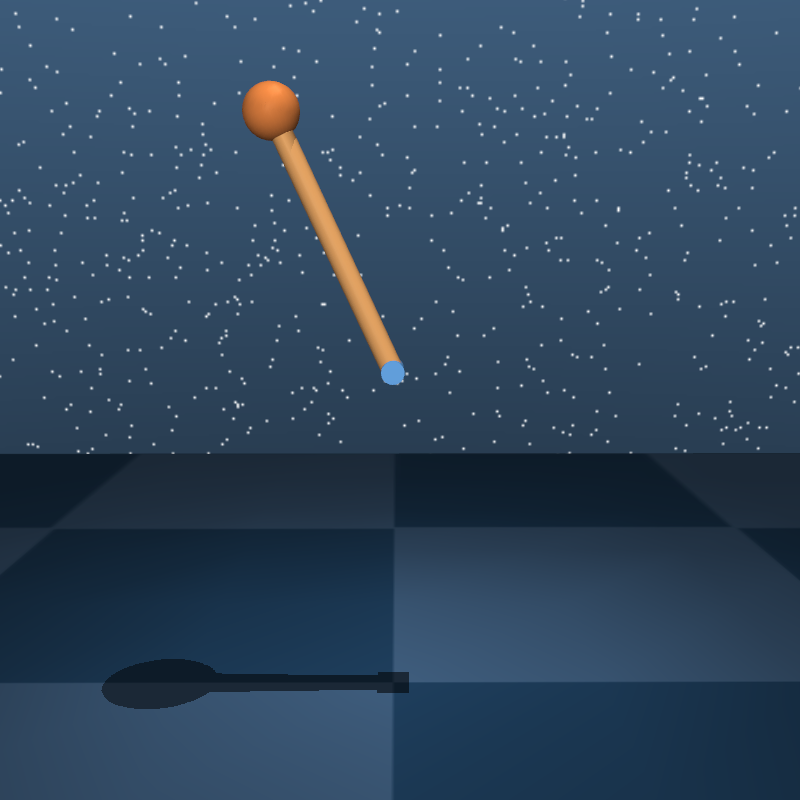}
\hspace{\myhsep\textwidth}
\includegraphics[width=\mywidth\textwidth]{./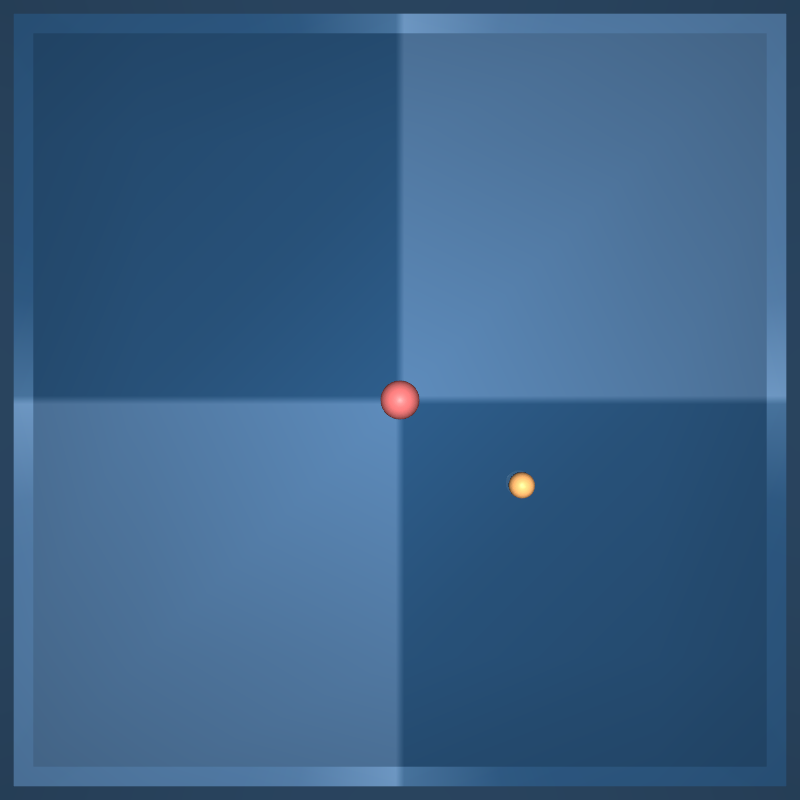}
\hspace{\myhsep\textwidth}
\includegraphics[width=\mywidth\textwidth]{./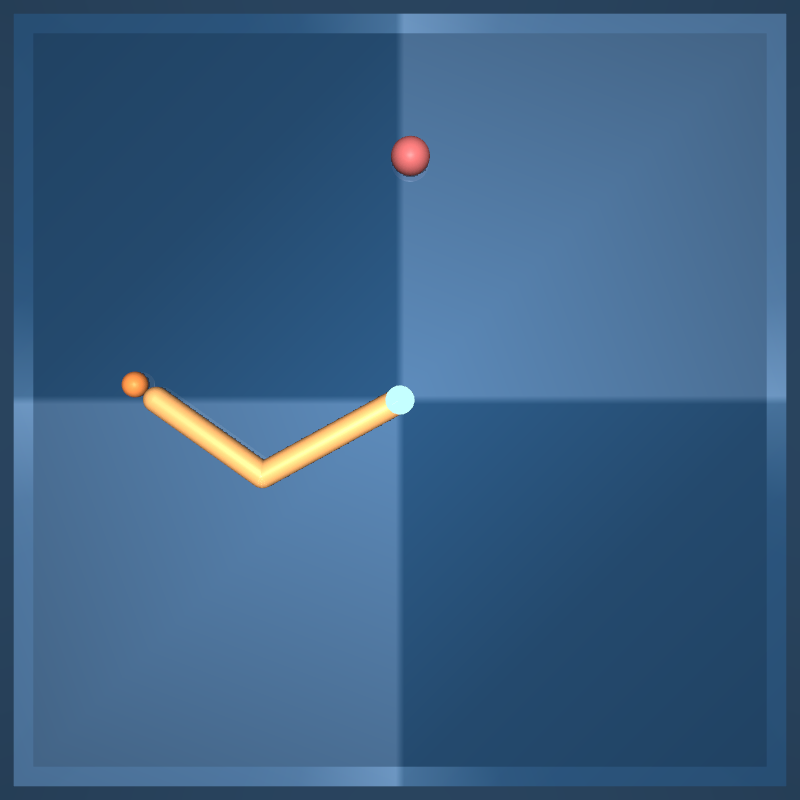}
\hspace{\myhsep\textwidth}
\includegraphics[width=\mywidth\textwidth]{./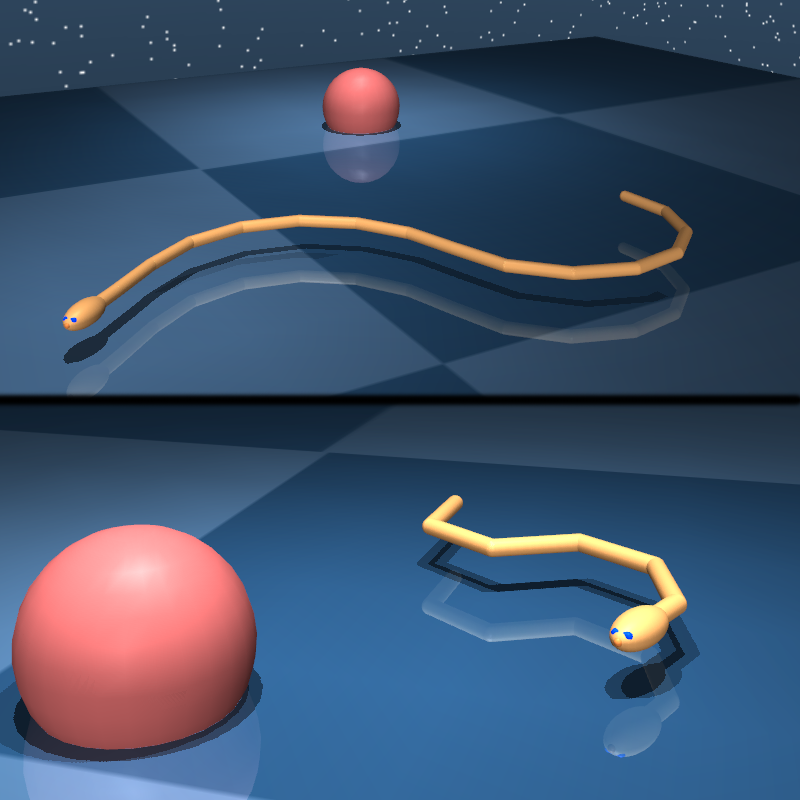}
\hspace{\myhsep\textwidth}
\includegraphics[width=\mywidth\textwidth]{./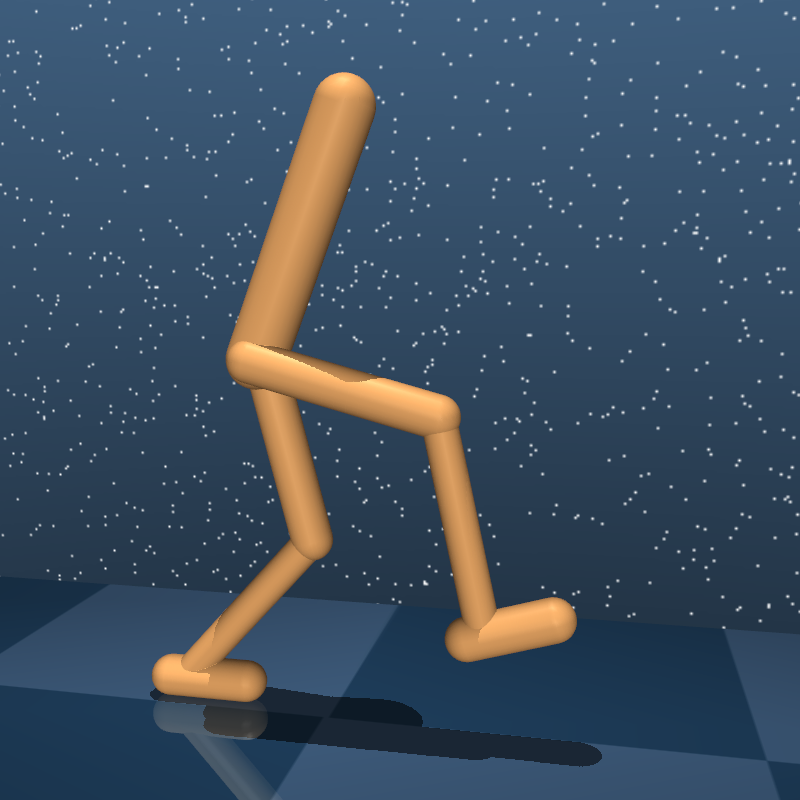}
\caption{Control Suite domains used for benchmarking. \textit{Top}: Acrobot, Ball-in-cup, Cart-pole, Cheetah, Finger, Fish, Hopper.
\textit{Bottom}: Humanoid, Manipulator, Pendulum, Point-mass, Reacher, Swimmers (6 and 15 links), Walker.}
\label{fig:images}
\end{minipage}
\end{figure*}

%We will have plots for all environments in the Appendix.
The suite of continuous control tasks that we are evaluating against contains 18 tasks, comprising a wide range of domains including well known tasks from the literature. For example, the classical cart-pole and acrobot dynamical systems, 2D and Humanoid walking as well as simple low-dimensional planar reaching and manipulation tasks. This suite of tasks was built in python on top of mujoco and will also be open sourced to the public by the time of publication.

While we include plots depicting the performance of our algorithm on all tasks below; comparing it against the state-of-the-art algorithms in terms of data-efficiency. We want to start by directing the attention of the reader to a more detailed evaluation on three of the harder tasks from the suite.

\subsubsection{Detailed Analysis on Walker-2D, Acrobot, Hopper}
\label{sec_res_detailed}
We start by looking at the results for the classical Acrobot task (two degrees of freedom, one continuous action dimension) as well as the 2D walker (which has 12 degrees of freedom and thus a 12 dimensional action space and a 21 dimensional state space) and the hopper standing task. The reward in the Acrobot task is the distance of the robots end-effector to an upright position of the underactuated system. For the walker task it is given by the forward velocity, whereas in the hopper the requirement is to stand still. 

Figure \ref{fig_detailed} shows the results for this task obtained by applying our algorithm MPO as well as several ablations -- in which different parts were removed from the MPO optimization -- and two baselines: our implementation of Proximal Policy Optimization (PPO)~\citep{schulman2017ppo} and DDPG. The hyperparameters for MPO were kept fixed for all experiments in the paper (see the appendix for hyperparameter settings).

As a first observation, we can see that MPO gives stable learning on all tasks and, thanks to its fully off-policy implementation, is significantly more sample efficient than the on-policy PPO baseline. Furthermore, we can observe that changing from the non-parametric variational distribution to a parametric distribution\footnote{We note that we use a value function baseline $\bE_\pi[Q(s,\cdot)]$ in this setup. See appendix for details.} (which, as described above, can be related to PPO) results in only a minor asymptotic performance loss but slowed down optimisation and thus hampered sample efficiency; which can be attributed to the fact that the parametric $q$ distribution required a stricter KL constraint. Removing the automatically tuned KL constraint and replacing it with a manually set entropy regulariser then yields an off-policy actor-critic method with Retrace. This policy gradient method still uses the idea of estimating the integral over actions -- and thus, for a gradient based optimiser, its likelihood ratio derivative -- via multiple action samples (as judged by a Q-Retrace critic). This idea has previously been coined as using the expected policy gradient (EPG) \citep{CiosekEPG} and we hence denote the corresponding algorithm with EPG + Retrace, which no-longer follows the intuitions of the MPO perspective. EPG + Retrace performed well when the correct entropy regularisation scale is used. This, however, required task specific tuning (c.f. Figure \ref{fig_suite} where this hyperparameter was set to the one that performed best in average across tasks).
Finally using only a single sample to estimate the integral (and hence the likelihood ratio gradient) results in an actor-critic variant with Retrace that is the least performant off-policy algorithm in our comparison.
%\note{Double check that this contains all conclusions}

\begin{figure}[ht]
\caption{Ablation study of the MPO algorithm and comparison to common baselines from the literature on three domains from the control suite. We plot the median performance over 10 experiments with different random seeds.}
\centering
\vspace{0.3cm}
\includegraphics[width=\textwidth]{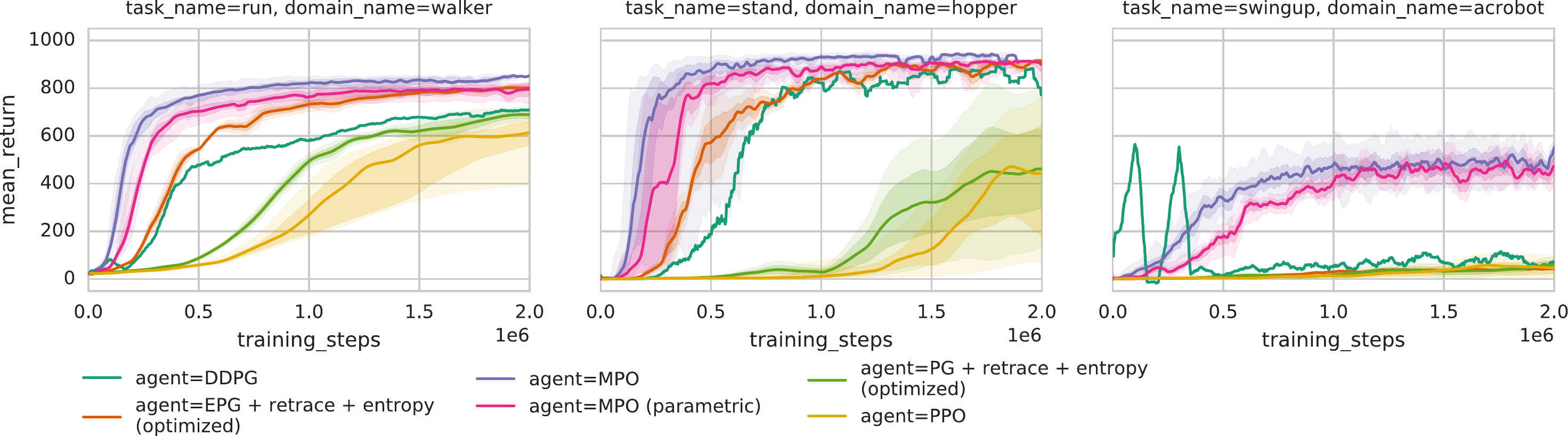}
\label{fig_detailed}
\end{figure}

\subsubsection{Complete results on the control suite}
The results for MPO (non-parameteric) -- and a comparison to an implementation of state-of-the-art algorithms from the literature in our framework -- on all the environments from the control suite that we tested on are shown in Figure \ref{fig_suite}. All tasks have rewards that are scaled to be between 0 and 1000. We note that in order to ensure a fair comparison all algorithms ran with exactly the same network configuration, used a single learner (no distributed computation), used the same optimizer and were tuned w.r.t. their hyperparameters for best performance across all tasks. We refer to the appendix for a complete description of the hyperparameters. Our comparison is made in terms of data-efficiency.

From the plot a few trends are readily apparent: i) We can clearly observe the advantage in terms of data-efficiency that methods relying on a Q-critic obtain over the PPO baseline. This difference is so extreme that in several instances the PPO baseline converges an order of magnitude slower than the off-policy algorithms and we thus 
indicate the asymptotic performance of each algorithm of PPO and DDPG (which also improved significantly later during training in some instances) with a colored star in the plot; ii) the difference between the MPO results and the (expected) policy gradient (EPG) with entropy regularisation confirm our suspicion from Section \ref{sec_res_detailed}: finding a good setting for the entropy regulariser that transfers across environments without additional constraints on the policy distribution is very difficult, leading to instabilities in the learning curves. In contrast to this the MPO results appear to be stable across all environments; iii) Finally, in terms of data-efficiency the methods utilising Retrace obtain a clear advantage over DDPG. The single learner vanilla DDPG implementation learns the lower dimensional environments quickly but suffers in terms of learning speed in environments with sparse rewards (finger, acrobot) and higher dimensional action spaces. Overall, MPO is able to solve all environments using surprisingly moderate amounts of data. On average less than 1000 trajectories (or $~ 10^6$ samples) are needed to reach the best performance.

\subsection{High-dimensional continuous control}
Next we turn to evaluating our algorithm on two higher-dimensional continuous control problems; humanoid and walker.  
To make computation time bearable in these more complicated domains we utilize a parallel variant of our algorithm: in this implementation K learners are all independently collecting data from an instance of the environment. Updates are performed at the end of each collected trajectory using distributed synchronous gradient descent on a shared set of policy and Q-function parameters (we refer to the appendix for an algorithm description). 
The results of this experiment are depicted in Figure \ref{fig_high_d}. 

For the Humanoid running domain we can observe a similar trend to the experiments from the previous section: MPO quickly finds a stable running policy, outperforming all other algorithms in terms of sample efficiency also in this high-dimensional control problem. 

The case for the Walker-2D parkour domain (where we compare against a PPO baseline) is even more striking: where standard PPO requires approximately \emph{1M trajectories} to find a good policy MPO finds a solution that is asymptotically no worse than the PPO solution in \textbf{in about 70k trajectories (or 60M samples)}, resulting in an order of magnitude improvement. In addition to the walker experiment we have also evaluated MPO on the Parkour domain using a humanoid body (with 22 degrees of freedom) which was learned successfully (not shown in the plot, please see the supplementary video).

\begin{figure}[ht]
\caption{MPO on high-dimensional control problems (Parkour Walker2D and Humanoid walking from control suite).}
\centering
\includegraphics[width=0.8\textwidth]{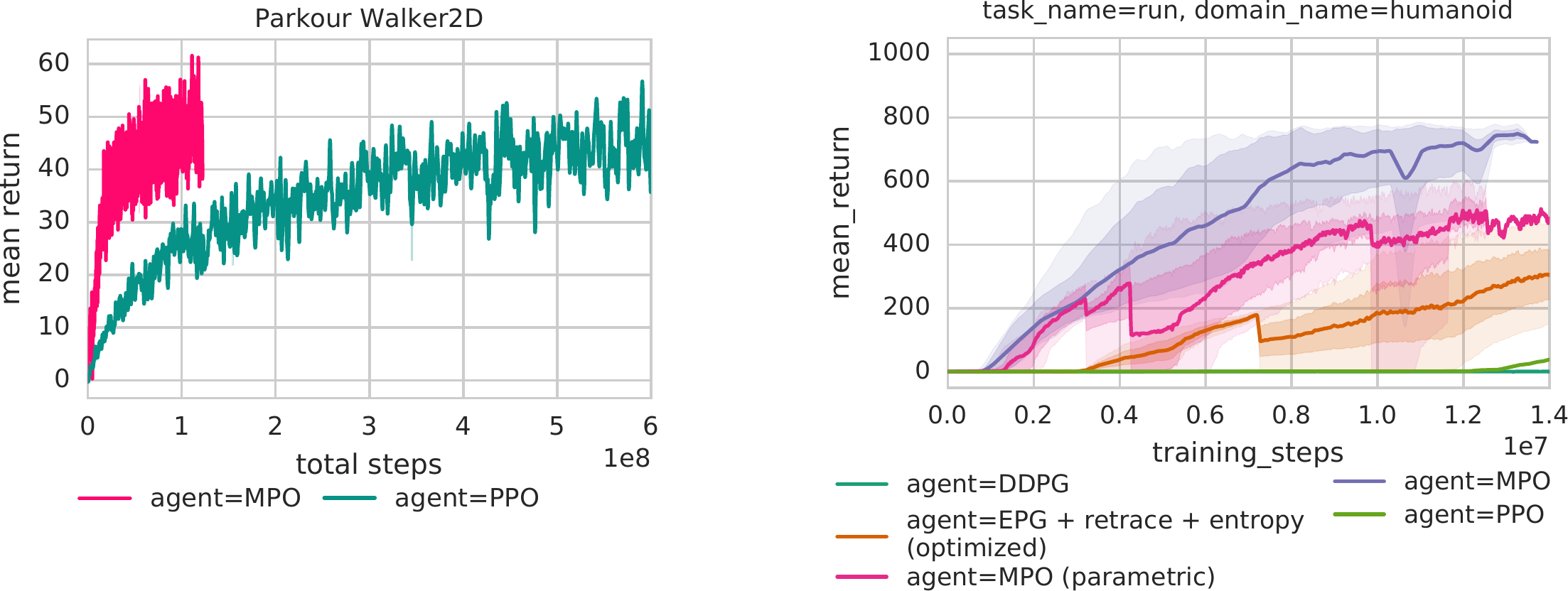}
\label{fig_high_d}
\end{figure}

\subsection{Discrete control}
As a proof of concept -- showcasing the robustness of our algorithm and its hyperparameters -- we performed an experiment on a subset of the games contained contained in the "Arcade Learning Environment" (ALE) where we used \emph{the same hyperparameter} settings for the KL constraints as for the continuous control experiments. The results of this experiment can be found in the Appendix.

\section{Conclusion}
We have presented a new off-policy reinforcement learning algorithm called Maximum a-posteriori Policy Optimisation (MPO). The algorithm is motivated by the connection between RL and inference and it consists of an alternating optimisation scheme that has a direct relation to several existing algorithms from the literature. Overall, we arrive at a novel, off-policy algorithm that is highly data efficient, robust to hyperparameter choices and applicable to complex control problems. We demonstrated the effectiveness of MPO on a large set of continuous control problems.

\begin{figure}[ht!]
\caption{Complete comparison of results for the control suite. We plot the median performance over 10 random seeds together with 5 and 95 \% quantiles (shaded area). Note that for DDPG we only plot the  median to avoid clutter in the plots. For DDPG and PPO final performance is marked by a star).}
\centering
\includegraphics[width=13cm]{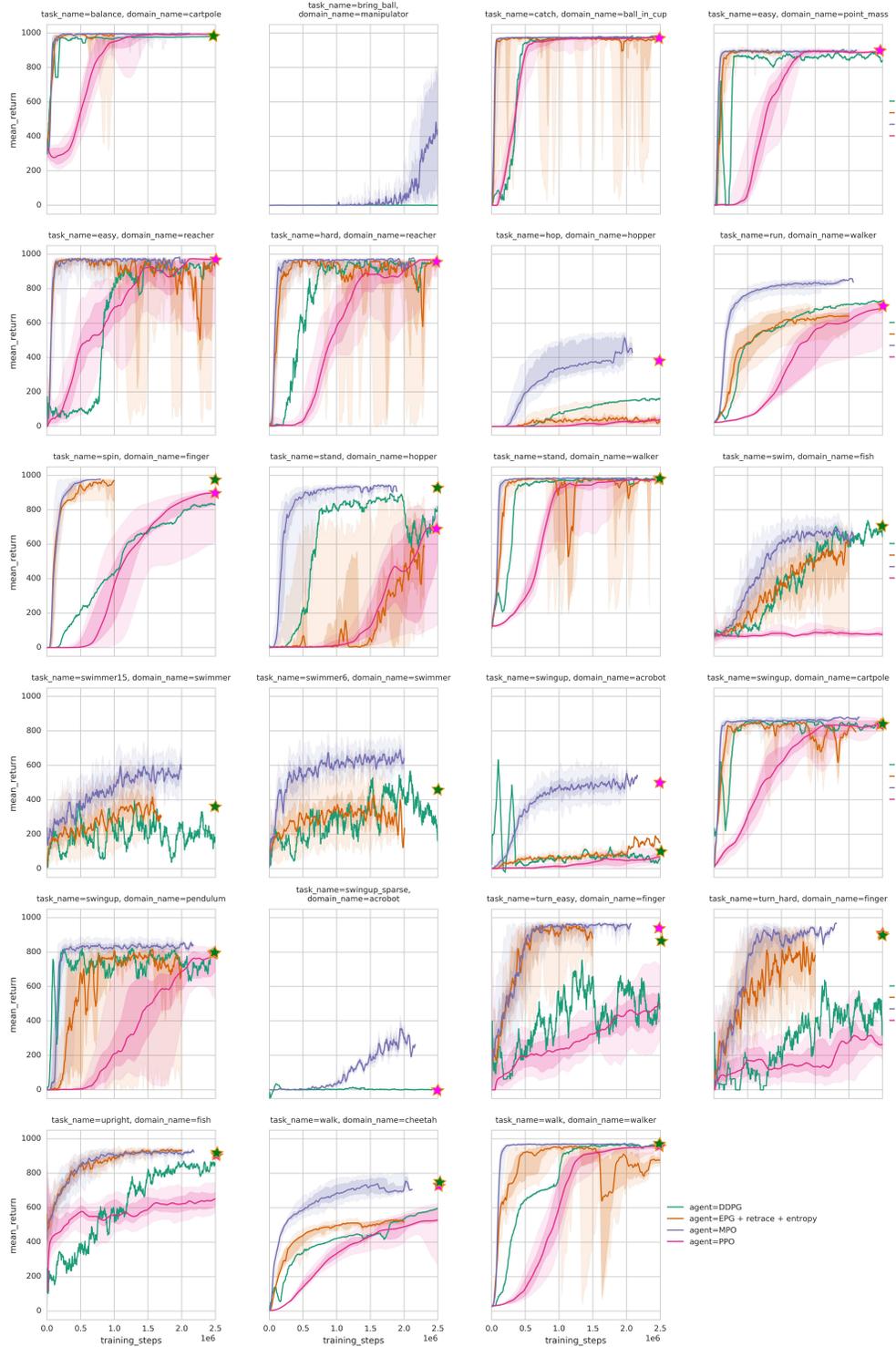}
\label{fig_suite}
\end{figure}

\subsubsection*{Acknowledgements}
The authors would like to thank David Budden, Jonas Buchli, Roland Hafner, Tom Erez, Jonas Degrave, Guillaume Desjardins, Brendan O'Donoghue and many others of the DeepMind team for their support and feedback during the preparation of this manuscript.

\bibliography{iclr2018_conference}
\bibliographystyle{iclr2018_conference}

\newpage

\appendix

\section{Proof of monotonic improvement for the KL-regularized policy optimization procedure}
\label{sect:proof}
In this section we prove a monotonic improvement guarantee for KL-regularized policy optimization via alternating updates on $\pi$ and $q$ under the assumption that the prior on $\theta$ is uninformative.

\subsection{Regularized Reinforcement Learning}
Let $\pi$ be an arbitrary policy. For any other policy $q$ such that, for all $x,a$, $\{\pi(a|x)>0\} \implies \{ q(a|x)>0\}$, define the $\pi$-regularized reward for policy $q$:
$$r^{\pi, q}_\alpha(x,a) = r(x,a) - \alpha \log\frac{q(a|x)}{\pi(a|x)},$$
where $\alpha > 0$.

\paragraph{Bellman operators:}
Define the $\pi$-regularized Bellman operator for policy $q$
$$T^{\pi, q}_\alpha V(x) = \bE_{a\sim q(\cdot|x)}\Big[ r^{\pi,q}_\alpha(x,a) + \gamma \bE_{y \sim p(\cdot|x,a)} V(y)\Big],$$
and the non-regularized Bellman operator for policy $q$
$$T^{q} V(x) = \bE_{a \sim q(\cdot|x)}\Big[ r(x,a) + \gamma \bE_{y \sim p(\cdot|x,a)} V(y)\Big].$$

\paragraph{Value function:} Define the $\pi$-regularized value function for policy $q$ as
$$V^{\pi, q}_\alpha(x) = \bE_{q}\Big[\sum_{t \geq 0 }\gamma^t r^{\pi, q}_\alpha(x_t,a_t) | x_0=x, q \Big].$$
and the non-regularized value function
$$V^{q}(x) = \bE_{q}\Big[\sum_{t \geq 0 }\gamma^t r(x_t,a_t) | x_0=x, q \Big].$$

\begin{proposition}{}\label{prop:inequality}
For any $q,\pi,V$, we have $V^{\pi, q}_\alpha \leq V^{q}$ and $T^{\pi, q}_\alpha V \leq T^{q} V$. Indeed
$$
\bE_{q}\Big[ \log\frac{q(a_t|x_t)}{\pi(a_t|x_t)} \Big]
= KL\big( q(\cdot|x_t) \| \pi(\cdot|x_t)\big) \geq 0.
$$
\end{proposition}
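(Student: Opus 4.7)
The plan is to derive both inequalities as direct consequences of the non-negativity of the KL divergence (Gibbs' inequality), which is already noted in the statement. The regularized reward differs from the non-regularized reward by exactly $-\alpha \log(q(a|x)/\pi(a|x))$, so when I take the expectation under $q(\cdot|x)$, the extra term turns into $-\alpha\, \mathrm{KL}(q(\cdot|x)\,\|\,\pi(\cdot|x)) \leq 0$. So essentially the whole proof reduces to pushing this observation through the definitions of $T^{\pi,q}_\alpha$ and $V^{\pi,q}_\alpha$.

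For the Bellman-operator inequality, I would simply plug the definition of $r^{\pi,q}_\alpha$ into $T^{\pi,q}_\alpha V(x)$, use linearity of expectation to split the inner expectation into a reward part plus a bootstrap part (which together form $T^q V(x)$) and a residual $-\alpha\, \bE_{a\sim q(\cdot|x)}[\log(q(a|x)/\pi(a|x))]$. Recognizing the residual as $-\alpha\, \mathrm{KL}(q(\cdot|x)\,\|\,\pi(\cdot|x)) \leq 0$ yields $T^{\pi,q}_\alpha V(x) \leq T^q V(x)$ pointwise.

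For the value-function inequality, the cleanest route is to expand $V^{\pi,q}_\alpha(x)$ using the definition of $r^{\pi,q}_\alpha$, apply linearity of the expectation under $q$, and use the tower property to rewrite
\begin{equation*}
\bE_q\!\left[\sum_{t\geq 0}\gamma^t \log\frac{q(a_t|x_t)}{\pi(a_t|x_t)} \,\Big|\, x_0=x\right]
= \sum_{t\geq 0}\gamma^t\, \bE_q\!\left[\mathrm{KL}\!\left(q(\cdot|x_t)\,\|\,\pi(\cdot|x_t)\right)\,\Big|\, x_0=x\right] \geq 0.
\end{equation*}
Subtracting an $\alpha$ times a non-negative quantity from $V^q(x)$ gives $V^{\pi,q}_\alpha(x) \leq V^q(x)$. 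Alternatively, one could derive the value-function bound from the Bellman-operator bound by iterating: since $T^{\pi,q}_\alpha$ and $T^q$ are $\gamma$-contractions with fixed points $V^{\pi,q}_\alpha$ and $V^q$ respectively, and since $T^{\pi,q}_\alpha$ is monotone and dominated pointwise by $T^q$, iterating from a common starting point and taking the limit transfers the inequality to the fixed points.

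There is no real obstacle here: the only ingredient beyond routine unpacking of definitions is Gibbs' inequality, and the main thing to be careful about is the absolute-continuity condition $\pi(a|x)>0 \Rightarrow q(a|x)>0$ already assumed in the setup, which ensures $\log(q/\pi)$ is well-defined wherever it is integrated against $q$.
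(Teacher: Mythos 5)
Your proposal is correct and follows essentially the same route as the paper, whose entire justification is the one-line observation that $\bE_{q}[\log(q(a|x)/\pi(a|x))] = \mathrm{KL}(q(\cdot|x)\,\|\,\pi(\cdot|x)) \geq 0$; you simply spell out how this non-negativity propagates through the definitions of $T^{\pi,q}_\alpha$ and $V^{\pi,q}_\alpha$ term by term. The extra details you supply (linearity of expectation, the tower property for the discounted sum, and the alternative fixed-point iteration argument) are all sound elaborations of that same idea.
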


\paragraph{Optimal value function and policy}
Define the optimal regularized value function:
$V^{\pi,*}_\alpha(x) = \max_{q} V^{\pi,q}_\alpha(x)$, and  the optimal (non-regularized) value function: $V^{*}(x) = \max_{q} V^{q}(x)$.

The optimal policy of the $\pi$-regularized problem $q^{\pi,*}_\alpha (\cdot|x) = \arg\max_{q}V^{\pi,q}_\alpha(x)$ and the optimal policy of the non-regularized problem $q^*(\cdot|x) = \arg\max_{q}V^{q}$.

\begin{proposition}{Bellman equations} We have that $V^{\pi,q}_\alpha$ is the unique fixed point of $T^{\pi,q}_\alpha$, and $V^{q}$ is the unique fixed point of $T^{q}$. Thus we have the following Bellman equations:
 For all $x\in X$, 
 \beqa
 V^{\pi,q}_\alpha(x) &=& \sum_a q(a|x) \Big[ r^{\pi,q}_\alpha(x, a) + \gamma \bE_{y\sim p(\cdot|x,a)} \big[ V^{\pi,q}_\alpha(y)\big] \Big] \label{eq:bellman.reg}\\
 V^{q}(x) &=& \sum_a q(a|x) \Big[ r(x, a) + \gamma \bE_{y\sim p(\cdot|x,a)} \big[ V^{q}(y)\big] \Big] \label{eq:bellman.non-reg}\\
 V^{\pi,*}_\alpha(x) &=& r^{\pi,q^{\pi,*}_\alpha}_\alpha(x, a) + \gamma \bE_{y\sim p(\cdot|x,a)} \big[ V^{\pi,*}_\alpha(y)\big] \mbox{ for all } a\in A, \label{eq:bellman.reg.optimal}\\
 V^{*}(x) &=& \max_{a\in A} \Big[ r(x, a) + \gamma \bE_{y\sim p(\cdot|x,a)} \big[ V^{*}(y)\big] \Big]. \label{eq:bellman.non-reg.optimal}
\eeqa 
\end{proposition}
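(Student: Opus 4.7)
The plan is to handle the four identities in two tiers, reusing the same contraction-plus-unrolling machinery throughout. First I would dispatch the two ``evaluation'' Bellman equations \eqref{eq:bellman.reg} and \eqref{eq:bellman.non-reg}, which characterize the value functions of a fixed policy $q$. Then I would treat the two ``optimality'' equations \eqref{eq:bellman.reg.optimal} and \eqref{eq:bellman.non-reg.optimal}, which require an additional closed-form step to identify the $\arg\max$.

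For the evaluation tier, I would show that both $T^{\pi,q}_\alpha$ and $T^{q}$ are $\gamma$-contractions in the sup-norm on bounded functions $V\colon X\to\mathbb{R}$. The decisive observation is that the regularization term $-\alpha\log(q(a|x)/\pi(a|x))$ is independent of $V$, so when differencing $T^{\pi,q}_\alpha V_1 - T^{\pi,q}_\alpha V_2$ it cancels, leaving the same $\gamma$-contraction bound as for $T^{q}$. Banach's fixed-point theorem then supplies a unique fixed point for each operator. To identify these fixed points with $V^{\pi,q}_\alpha$ and $V^{q}$ I would split off the $t=0$ summand in the defining series and appeal to the Markov property to rewrite the tail expectation as $\gamma\,\bE_{y\sim p(\cdot|x,a)}[V^{\pi,q}_\alpha(y)]$ (respectively $V^{q}(y)$). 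This reproduces \eqref{eq:bellman.reg} and \eqref{eq:bellman.non-reg} exactly, and uniqueness closes the loop.

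For the optimality tier, the non-regularized equation \eqref{eq:bellman.non-reg.optimal} is the classical Bellman optimality equation: show that $T^{\star}V(x)=\max_a[r(x,a)+\gamma\,\bE_{y}V(y)]$ is also a $\gamma$-contraction and verify $V^{\star}$ is its fixed point by the usual greedy-policy argument. The regularized equation \eqref{eq:bellman.reg.optimal} is more delicate because the identity is asserted \textbf{pointwise in $a$}. I would derive it by solving the one-step maximization in the regularized Bellman operator analytically: Lagrangian optimization over $q(\cdot|x)$ with a normalization constraint yields the softmax form
\[
q^{\pi,*}_\alpha(a|x) \;\propto\; \pi(a|x)\exp\!\Big(\tfrac{1}{\alpha}\big(r(x,a)+\gamma\,\bE_{y}V^{\pi,*}_\alpha(y)\big)\Big),
\]
together with the log-partition expression $V^{\pi,*}_\alpha(x)=\alpha\log\sum_a\pi(a|x)\exp((r(x,a)+\gamma\,\bE_{y}V^{\pi,*}_\alpha(y))/\alpha)$. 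Substituting back into $r^{\pi,q^{\pi,*}_\alpha}_\alpha(x,a)+\gamma\,\bE_{y}V^{\pi,*}_\alpha(y)$ shows the $a$-dependence is absorbed by the $-\alpha\log(q^{\pi,*}_\alpha/\pi)$ term, leaving a quantity equal to $V^{\pi,*}_\alpha(x)$ for every $a$ in the support of $q^{\pi,*}_\alpha$.

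The main obstacle I anticipate is precisely this pointwise-in-$a$ statement in \eqref{eq:bellman.reg.optimal}: unlike the other three identities it does not follow from the contraction/fixed-point structure alone and requires the explicit softmax form of the optimum. The attending technicalities are (i) ensuring the one-step maximization over $q$ is well posed so that the log-sum-exp is finite (handled by the support condition on $\pi$ stated in the setup), and (ii) being careful about the support of $q^{\pi,*}_\alpha$ so that $q^{\pi,*}_\alpha(a|x)/\pi(a|x)$ is well defined wherever the equality is asserted. Everything else is standard discounted-MDP bookkeeping with the regularization term carried through.
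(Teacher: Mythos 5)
Your plan is sound. Note first that the paper offers no proof of this proposition at all -- it is asserted as a standard fact and the appendix only uses it (e.g.\ the monotonicity and fixed-point property of $T^{\pi,q}_\alpha$ are invoked in the proof of the subsequent proposition). So there is no paper argument to compare against; judged on its own, your strategy is the standard and correct one. The contraction argument for the two evaluation equations is right, and the decisive observation -- that the KL/log-ratio term does not depend on $V$ and therefore cancels when differencing, so $T^{\pi,q}_\alpha$ inherits the same $\gamma$-contraction modulus as $T^{q}$ -- is exactly what makes the regularized case go through. You also correctly isolate the only genuinely non-routine item: equation \eqref{eq:bellman.reg.optimal} holds \emph{for every} $a\in A$, and this cannot come from the fixed-point structure alone; it requires the explicit Boltzmann form of $q^{\pi,*}_\alpha$ and the log-partition identity $V^{\pi,*}_\alpha(x)=\alpha\log\sum_a\pi(a|x)\exp\big((r(x,a)+\gamma\bE_y V^{\pi,*}_\alpha(y))/\alpha\big)$, under which the $a$-dependence of $r(x,a)+\gamma\bE_y V^{\pi,*}_\alpha(y)$ is exactly absorbed by $-\alpha\log(q^{\pi,*}_\alpha(a|x)/\pi(a|x))$. (This closed form is consistent with Equation \eqref{eq:soft.policy} in the paper's later proposition.) Two small points to make explicit if you write this out in full: (i) for the optimality tier you should also verify that the \emph{soft} optimality operator $V\mapsto\max_q T^{\pi,q}_\alpha V$ is itself a $\gamma$-contraction (it is, since it maps $V+c$ to $\max_q T^{\pi,q}_\alpha V+\gamma c$ and is monotone), so that the dynamic-programming principle justifies reducing $\max_q V^{\pi,q}_\alpha$ to the one-step maximization you solve analytically; and (ii) the well-posedness of the infinite sums requires the per-step KL terms to be bounded along the trajectory, which your support condition on $\pi$ handles.
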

Notice that (\ref{eq:bellman.reg.optimal}) holds for all actions $a\in A$, and not in expectation w.r.t.~$a\sim q(\cdot|x)$ only.

\subsection{Regularized joint policy gradient}
We now consider a parametrized policy $\pi_{\theta}$ and consider maximizing the regularized joint policy optimization problem for a given initial state $x_0$ (this could be a distribution over initial states). Thus we want to find a parameter $\theta$ that (locally) maximizes 
$$\mathcal{J}(\theta, q) = V^{\pi_{\theta},q}(x_0) = \bE_{q}\Big[\sum_{t\geq 0 }\gamma^t \big( r(x_t,a_t) - \alpha \mbox{KL}(q(\cdot | x_t) \| \pi_{\theta}(\cdot | x_t)) \big) \big| x_0, q \Big].$$

We start with an initial parameter $\theta_0$ and define a sequence of  policies $\pi_i = \pi_{\theta_i}$ parametrized by $\theta_i$, in the following way:
\begin{itemize}
 \item Given $\theta_i$, define 
 $$q_i = \arg\max_{q} T^{\pi_{\theta_i},q}_\alpha V^{\pi_{\theta_i}},$$
 \item Define $\theta_{i+1}$ as
 \beq\label{eq:theta.k+1}
 \theta_{i+1}=\theta_i - \beta  \nabla_{\theta} \bE_{\pi_i}\Big[\sum_{t\geq 0 }\gamma^t \mbox{KL}\big( q_k(\cdot | x_t)\|\pi_{\theta}(\cdot | x_t)\big)_{| \theta=\theta_i} \big| x_0, \pi_i \Big].
 \eeq
\end{itemize}

\begin{proposition}{} We have the following properties:
 \begin{itemize}
  \item The policy $q_i$ satisfies:
\beq\label{eq:soft.policy}
q_i(a|x) = \frac{\pi_i(a|x) e^{\frac{1}{\alpha} Q^{\pi_i}(x,a)}}{\bE_{b\sim \pi_i(\cdot|x)}\big[ e^{\frac{1}{\alpha} Q^{\pi_i}(x,b)}\big] },\eeq
where $Q^{\pi}(x,a)=r(x,a) + \gamma \bE_{y\sim p(\cdot|x,a)} V^{\pi}(y)$.
\item We have
\beq\label{eq:E.improvement}
V^{\pi_i,q_i}_\alpha\geq V^{\pi_i}.
\eeq
 \item For $\eta$ sufficiently small, we have 
 \beq \label{eq:improved.policy}
 \mathcal{J}(\theta_{i+1},q_{i+1}) \geq \mathcal{J}(\theta_{i},q_{i}) + c g_i,
 \eeq
where $c$ is a numerical constant, and $g_i$ is the norm of the gradient (minimized by the algorithm):
$$g_i = \Big\| \nabla_{\theta} \bE_{\pi_i}\Big[\sum_{t\geq 0 }\gamma^t \mbox{KL}\big( q_i(\cdot | x_t)\|\pi_{\theta}(\cdot | x_t)\big)_{| \theta=\theta_i} \big| x_0, \pi_i \Big] \Big\|.$$

Thus we build a sequence of policies $(\pi_{\theta_i},q_i)$ whose values $\mathcal{J}(\theta_{i},q_i)$ are non-decreasing thus converge to a local maximum. In addition, the improvement is lower-bounded by a constant times the norm of the gradient, thus the algorithm keeps improving the performance until the gradient vanishes (when we reach the limit of the capacity of our representation).
\end{itemize}
\end{proposition}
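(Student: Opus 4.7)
The proposition packages three claims; I would tackle them in the listed order, since the third builds on the first two.

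For (\ref{eq:soft.policy}), I would note that $\max_q T^{\pi_i,q}_\alpha V^{\pi_i}$ decouples across states into the pointwise problem of maximising $\sum_a q(a|x)\bigl[Q^{\pi_i}(x,a) - \alpha\log q(a|x) + \alpha\log\pi_i(a|x)\bigr]$ over distributions on $a$ subject to $\sum_a q(a|x) = 1$. Since the objective is strictly concave in $q(\cdot|x)$ and the constraint is affine, the Lagrangian stationarity condition gives $q(a|x) \propto \pi_i(a|x)\exp(Q^{\pi_i}(x,a)/\alpha)$; normalising by $\bE_{b\sim\pi_i}[\exp(Q^{\pi_i}(x,b)/\alpha)]$ yields the claimed form.

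For (\ref{eq:E.improvement}), the plan is to bootstrap from a one-step inequality. Because $q_i$ is the pointwise maximiser, $T^{\pi_i,q_i}_\alpha V^{\pi_i} \geq T^{\pi_i,\pi_i}_\alpha V^{\pi_i}$ at every state; the right-hand side equals $T^{\pi_i}V^{\pi_i}$ since $\mathrm{KL}(\pi_i\|\pi_i)=0$, and this in turn equals $V^{\pi_i}$ by (\ref{eq:bellman.non-reg}). So $T^{\pi_i,q_i}_\alpha V^{\pi_i} \geq V^{\pi_i}$, and I would conclude by iterating: the regularised operator is monotone and $\gamma$-contracting, so the iterates $(T^{\pi_i,q_i}_\alpha)^n V^{\pi_i}$ converge monotonically to the fixed point $V^{\pi_i,q_i}_\alpha$, preserving the inequality in the limit.

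For (\ref{eq:improved.policy}) I would follow the standard EM chain of inequalities. Writing $\mathcal{J}(\theta,q_i) = \bE_{q_i}[\sum_t \gamma^t r(x_t,a_t)] - \alpha F(\theta)$, where $F(\theta)$ is the discounted expected KL appearing in (\ref{eq:theta.k+1}), and assuming $F$ is $L$-smooth, the descent lemma for the step $\theta_{i+1} = \theta_i - \beta\nabla F(\theta_i)$ with $\beta \leq 1/L$ yields $F(\theta_{i+1}) \leq F(\theta_i) - c\,g_i^2$, hence $\mathcal{J}(\theta_{i+1},q_i) \geq \mathcal{J}(\theta_i,q_i) + \alpha c\,g_i^2$. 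Applying (\ref{eq:E.improvement}) at the new iterate then gives $\mathcal{J}(\theta_{i+1},q_{i+1}) = V^{\pi_{i+1},q_{i+1}}_\alpha \geq V^{\pi_{i+1}}$, and the argument would conclude by linking $V^{\pi_{i+1}}$ back to $\mathcal{J}(\theta_{i+1},q_i)$ using Prop.~\ref{prop:inequality} together with the observation that the M-step pulls $\pi_{i+1}$ towards $q_i$.

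The hard part is exactly that last linking inequality, complicated by a mismatch in state distributions: (\ref{eq:theta.k+1}) writes the gradient with states drawn from $\pi_i$, whereas $\mathcal{J}(\theta,q_i)$ uses states drawn from $q_i$, so the descent lemma does not directly bound $\mathcal{J}(\cdot,q_i)$. My plan is to take $\beta$ small enough that first-order Taylor terms dominate remainders, so that the descent direction on $F$ and the ascent direction on $\mathcal{J}(\cdot,q_i)$ coincide up to higher order; equivalently, one may re-express $F$ using the $q_i$-induced state distribution, after which the EM chain closes cleanly and (\ref{eq:improved.policy}) follows with a numerical constant $c$ depending only on $L$, $\gamma$ and $\alpha$.
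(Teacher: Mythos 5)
Your arguments for \eqref{eq:soft.policy} and \eqref{eq:E.improvement} are correct and essentially identical to the paper's: the pointwise Lagrangian/concavity computation for the soft-max form, and the one-step inequality $T^{\pi_i,q_i}_\alpha V^{\pi_i}\geq T^{\pi_i,\pi_i}_\alpha V^{\pi_i}=V^{\pi_i}$ iterated through the monotone contraction to reach the fixed point.

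For \eqref{eq:improved.policy}, however, there is a genuine gap, and it is larger than the state-distribution mismatch you flag. First, the ``standard EM chain'' is not available here because the E-step is only \emph{partial}: $q_{i+1}$ maximises the one-step backup $T^{\pi_{i+1},q}_\alpha V^{\pi_{i+1}}$, not the functional $q\mapsto\mathcal{J}(\theta_{i+1},q)=V^{\pi_{i+1},q}_\alpha(x_0)$, so the inequality $\mathcal{J}(\theta_{i+1},q_{i+1})\geq\mathcal{J}(\theta_{i+1},q_i)$ that an EM argument would lean on does not come for free (the maximiser of the backup is taken with respect to $V^{\pi_{i+1}}$, not with respect to $V^{\pi_{i+1},q_i}_\alpha$, so you cannot run the usual policy-improvement iteration either). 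Second, your closing link is oriented the wrong way: Proposition~\ref{prop:inequality} gives $V^{\pi,q}_\alpha\leq V^{q}$, which compares $\mathcal{J}(\theta_{i+1},q_i)$ to $V^{q_i}$, not to $V^{\pi_{i+1}}$; and ``the M-step pulls $\pi_{i+1}$ towards $q_i$'' is not quantitatively usable --- after a single gradient step $\pi_{i+1}$ is $O(\beta)$-close to $\pi_i$, not to $q_i$, so no bound of the form $V^{\pi_{i+1}}\gtrsim V^{q_i}$ follows. The paper avoids both problems by a different route: it introduces the four-argument functional $\mathcal{F}(\pi,q,\theta,\pi')$ that decouples the state-visitation policy, the action distribution, the KL anchor and the policy defining the $Q$-function; it combines the one-step optimality of $q_{i+1}$ \eqref{eq:prop.mu_k+1} with the descent-lemma bound \eqref{eq:gradient.step}, controlling all cross terms as $O(\beta^2)$ because $\pi_{i+1}=\pi_i+O(\beta)$ and $q_{i+1}=q_i+O(\beta)$; and it then uses the resolvent identity $(I-\gamma P)^{-1}=(I-\gamma P')^{-1}+\gamma(I-\gamma P)^{-1}(P-P')(I-\gamma P')^{-1}$ to convert the $\pi_i$-occupancy-weighted one-step improvement \eqref{eq:almost.there} into the true value difference $V^{q_{i+1},\pi_{i+1}}_\alpha-V^{q_i,\pi_i}_\alpha$ up to $O(\beta^2)$, after which the first-order gain dominates for small $\beta$. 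Some version of this perturbation-plus-occupancy-measure argument (or an equivalent performance-difference lemma) is what your sketch is missing; ``take $\beta$ small so that Taylor remainders are dominated'' is the right instinct but must be applied to the value difference itself, not to the KL objective $F$ alone.
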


\begin{proof}
We have
\beqan
q_i(\cdot|x) &=& \arg\max_q \bE_{a \sim q(\cdot|x)}\Big[ \underbrace{r(x,a) + \gamma \bE_{y\sim p(\cdot|x,a)} V^{\pi_i}(y)}_{Q^{\pi_i}(x,a)} - \alpha\log\frac{q(a|x)}{\pi_i(a|x)}\Big], 
\eeqan
from which we deduce \eqref{eq:soft.policy}.
Now, from the definition of $q_i$, we have
$$T^{\pi_i,q_i}_\alpha V^{\pi_i} \geq T^{\pi_i,\pi_i}_\alpha V^{\pi_i}= T^{\pi_i} V^{\pi_i}=V^{\pi_i}.$$
Now, since $T^{\pi_i,q_i}_\alpha$ is a monotone operator (i.e.~if $V_1\geq V_2$ elementwise, then $T^{\pi_i,q_i}_\alpha V_1 \geq T^{\pi_i,q_i}_\alpha V_2$) and its fixed point is $V^{\pi_i,q_i}_\alpha$, we have
$$V^{\pi_i,q_i}_\alpha=\lim_{t\rightarrow\infty} (T^{\pi_i,q_i}_\alpha)^t V^{\pi_i} \geq V^{\pi_i},$$
which proves \eqref{eq:E.improvement}.

Now, in order to prove \eqref{eq:improved.policy} we derive the following steps.

\paragraph{Step 1:}  From the definition of $q_{i+1}$ we have, for any $x$,
\beq\label{eq:prop.mu_k+1}
\bE_{a\sim q_{i+1}} \big[ Q^{\pi_{i+1}}(x,a)\big] - \alpha \mbox{KL}\big( q_{i+1}(\cdot | x) \| \pi_{i+1}(\cdot | x) \big)
\geq \bE_{a\sim q_{i}} \big[ Q^{\pi_{i+1}}(x,a)\big] - \alpha \mbox{KL}\big( q_{i}(\cdot | x) \| \pi_{i+1}(\cdot | x) \big).
\eeq

Writing the functional that we minimize 
\beqan
f(\pi, q, \theta) = \bE_{\pi}\Big[\sum_{t\geq 0 }\gamma^t \mbox{KL}\big( q(\cdot | x_t) \| \pi_{\theta}(\cdot | x_t)\big) \big| x_0, \pi \Big],
\eeqan
the update rule is $\theta_{i+1}=\theta_i - \beta \nabla_{\theta} f(\pi_i, q_i, \theta_i)$. Thus we have that for sufficiently small $\beta$, 
\beq\label{eq:gradient.step}
f(\pi_i,q_i, \theta_{i+1}) \leq f(\pi_i,q_i, \theta_i) -\beta  g_i,
\eeq
where $g_i = \frac 12 \big\| \nabla_{\theta} f(\pi_i,q_i, \theta_i)\big\|$.

\paragraph{Step 2:} Now define $\mathcal{F}$:
\beqan
\mathcal{F}(\pi, q, \theta, \pi') &=& \bE_{\pi}\Big[\sum_{t\geq 0 }\gamma^t \big( \bE_{a\sim q} \big[ Q^{\pi'}(x_t,a)\big] - \alpha \mbox{KL}\big( q(\cdot | x_t) \| \pi_{\theta}(\cdot | x_t) \big) \big) \big| x_0, \pi \Big] \\
&=& \delta_{x_0} (I-\gamma P^{\pi})^{-1} T^{\pi_{\theta},q}_\alpha V^{\pi'}\\
&=& \delta_{x_0} (I-\gamma P^{\pi})^{-1} T^{q} V^{\pi'} - f(\pi, q, \theta),
\eeqan
where $\delta_{x_0}$ is a Dirac (in the row vector $x_0$), and $P^{\pi}$ is the transition matrix for policy $\pi$.

From \eqref{eq:prop.mu_k+1} and \eqref{eq:gradient.step} we deduce that
\beqan
\mathcal{F}(\pi_{i}, q_{i+1}, \theta_{i+1},\pi_{i+1}) &\geq & \mathcal{F}(\pi_{i}, q_{i}, \theta_{i+1},\pi_{i+1}) \\
&\geq & \mathcal{F}(\pi_{i}, q_{i}, \theta_{i}, \pi_{i+1}) + \beta g_i.
\eeqan

We deduce
\beqan
&&\mathcal{F}(\pi_i, q_{i+1}, \theta_{i+1},\pi_{i}) \\ &\geq&  
\mathcal{F}(\pi_{i}, q_{i}, \theta_{i}, \pi_{i}) + \beta g_i \\
&&+ \mathcal{F}(\pi_i, q_{i+1}, \theta_{i+1},\pi_{i})  - \mathcal{F}(\pi_i, q_{i+1}, \theta_{i+1},\pi_{i+1})  + \mathcal{F}(\pi_i, q_{i}, \theta_{i},\pi_{i+1})  - \mathcal{F}(\pi_i, q_{i}, \theta_{i},\pi_{i})  \\
&=& \mathcal{F}(\pi_{i}, q_{i}, \theta_{i}, \pi_{i}) + \beta g_i +\\
&&\underbrace{\bE_{\pi_i}\Big[\sum_{t\geq 0 }\gamma^t \big( \bE_{a\sim q_{i+1}} \big[ Q^{\pi_i}(x_t,a)- Q^{\pi_{i+1}}(x_t,a) \big] - \bE_{a\sim q_{i}} \big[ Q^{\pi_i}(x_t,a)- Q^{\pi_{i+1}}(x_t,a) \big] \big)\Big] }_{=O(\beta^2)\mbox{ since }\pi_i=\pi_{i+1}+O(\beta) \mbox{ and } q_i=q_{i+1}+O(\beta) }
\eeqan

This rewrites:
\beq\label{eq:almost.there}
\delta_{x_0} (I-\gamma^{\pi_i})^{-1}\big( T^{q_{i+1},\pi_{i+1}}_\alpha V^{\pi_i} - T^{q_{i},\pi_i}_\alpha V^{\pi_i} \big) \geq \eta g_i + O(\beta^2).
\eeq

\paragraph{Step 3:}
Now a bit of algebra. For two stochastic matrices $P$ and $P'$, we have 
\beqan 
&& (I-\gamma P)^{-1} \\ &=& (I-\gamma P')^{-1} + \gamma (I-\gamma P)^{-1}(P-P')(I-\gamma P')^{-1} \\
&=& (I-\gamma P')^{-1} + \gamma \big[ (I-\gamma P')^{-1} + \gamma (I-\gamma P)^{-1}(P-P')(I-\gamma P')^{-1}\big] (P-P')(I-\gamma P')^{-1}\\
&=& (I-\gamma P')^{-1} + \gamma (I-\gamma P')^{-1}(P-P')(I-\gamma P')^{-1} \\
& & + \gamma^2 (I-\gamma P)^{-1}(P-P')(I-\gamma P')^{-1} (P-P')(I-\gamma P')^{-1}.\\
\eeqan

Applying this equality to the transition matrices $P^{\pi_k}$ and $P^{\pi_{k+1}}$ and since $\| P^{\pi_{k+1}}-P^{\pi_k} \| = O(\eta)$, we have:
\beqan
&& V^{q_{i+1},\pi_{i+1}}_\alpha \\ &=& (I-\gamma P^{\pi_{i}})^{-1} r^{q_{i+1},\pi_{i+1}}_\alpha  \\
&=& (I-\gamma P^{\pi_{i}})^{-1}r^{q_{i+1},\pi_{i+1}}_\alpha + \gamma (I-\gamma P^{\pi_{i}})^{-1}  (P^{\pi_{i+1}}-P^{\pi_{i}})(I-\gamma P^{\pi_{i}})^{-1} r^{q_{i+1},\pi_{i+1}}_\alpha +O(\beta^2) \\
&=& (I-\gamma P^{\pi_{i}})^{-1}r^{q_{i},\pi_{i}}_\alpha+ (I-\gamma P^{\pi_{i}})^{-1}  (r^{q_{i+1},\pi_{i+1}}_\alpha - r^{q_{i},\pi_{i}}_\alpha + \gamma P^{\pi_{i+1}}-\gamma P^{\pi_{i}})(I-\gamma P^{\pi_{i}})^{-1} r^{q_{i},\pi_{i}}_\alpha +O(\beta^2) \\
&=& V^{q_{i},\pi_{i}}_\alpha + (I-\gamma P^{\pi_{i}})^{-1}  (T^{q_{i+1},\pi_{i+1}}_\alpha V^{\pi_{i}} - T^{q_{i},\pi_{i}}_\alpha V^{\pi_{k}}) +O(\beta^2).
\eeqan

Finally, using \eqref{eq:almost.there}, we deduce that
\beqan 
\mathcal{J}(\theta_{i+1},q_{i+1}) &=& V^{q_{i+1},\pi_{i+1}}_\alpha(x_0) \\
&=& V^{q_{i},\pi_{i}}_\alpha(x_0)  + \delta_{x_0} (I-\gamma P^{\pi_{i}})^{-1}  (T^{q_{i+1},\pi_{i+1}}_\alpha V^{\pi_{i}} - T^{q_{i},\pi_{i}}_\alpha V^{\pi_{i}}) +O(\beta^2) \\
&\geq & \mathcal{J}(\theta_i, q_i)  + \eta g_i +O(\beta^2) \\
&\geq & \mathcal{J}(\theta_i, q_i)  + \frac 12 \eta g_i,
\eeqan
for small enough $\eta$.
\end{proof}

\section{Additional Experiment: Discrete control}
As a proof of concept -- showcasing the robustness of our algorithm and its hyperparameters -- we performed an experiment on a subset of the games contained contained in the "Arcade Learning Environment" (ALE). For this experiment we used \emph{the same hyperparameter} settings for the KL constraints as for the continuous control experiments as well as the same learning rate and merely altered the network architecture to the standard network structure used by DQN \cite{mnih2015human} -- and created a seperate network with the same architecture, but predicting the parameters of the policy distribution. 
A comparison between our algorithm and well established baselines from the literature, in terms of the mean performance, is listed in Table \ref{tab:ale}. 
While we do not obtain state-of-the-art performance in this experiment, the fact that MPO is competitive, out-of-the-box in these domains suggests that combining the ideas presented in this paper with recent advances for RL with discrete actions \citep{Bellemare17} could be a fruitful avenue for future work.

\begin{table}[t]
\caption{Results on a subset of the ALE environments in comparison to baselines taken from \citep{Bellemare17}}
\label{sample-table}
\begin{center}
\begin{tabular}{lrrrrr}
\multicolumn{1}{c}{\bf Game/Agent}  &\multicolumn{1}{c}{\bf Human} &\multicolumn{1}{c}{\bf DQN} &\multicolumn{1}{c}{\bf Prior. Dueling} &\multicolumn{1}{c}{\bf C51} & \multicolumn{1}{c}{\bf MPO} \\ \hline \\
Pong & 14.6 & 19.5 & 20.9 & 20.9 & 20.9  \\
Breakout &  30.5 & 385.5 & 366.0 & \textbf{748} & 360.5 \\
Q*bert & 13,455.0 & 13,117.3 & 18,760.3 & \textbf{23,784} & 10,317.0 \\  
Tennis & -8.3 & 12.2 & 0.0 & \textbf{23.1} & 22.2 \\
Boxing & 12.1 & 88.0 & 98.9 & \textbf{97.8} & 82.0 \\
\end{tabular}
\end{center}
\label{tab:ale}
\end{table}

\section{Experiment details}

In this section we give the details on the hyper-parameters used for each experiment. All the continuous control experiments use a feed-forward network except for Parkour-2d were we used the same network architecture as in \cite{heess2017emergence}. Other hyper parameters for MPO with non parametric variational distribution were set as follows, 
\begin{table}[ht]
\begin{center}
 \begin{tabular}{c||c|c} 
 Hyperparameter & control suite & humanoid \\
 \hline

 Policy net & 100-100 & 200-200\\ 
 Q function net & 200-200 & 300-300\\
 $\epsilon$ & 0.1 & \textquotedbl\\
 $\epsilon_{\mu}$ & 0.1 & \textquotedbl\\
 $\epsilon_{\Sigma}$ & 0.0001 & \textquotedbl\\
 Discount factor ($\gamma$) & 0.99 & \textquotedbl\\
 Adam learning rate & 0.0005 & \textquotedbl\\
\end{tabular}
\end{center}
\caption{Parameters for non-parametric variational distribution}
\end{table}

Hyperparameters for MPO with parametric variational distribution were as follows,
\begin{table}[ht]
\begin{center}
 \begin{tabular}{c||c|c} 
 Hyperparameter & control suite tasks & humanoid \\
 \hline
 Policy net & 100-100 & 200-200\\ 
 Q function net & 200-200 & 300-300\\
 $\epsilon_{\mu}$ & 0.1 & \textquotedbl\\
 $\epsilon_{\Sigma}$ & 0.0001 & \textquotedbl\\
 Discount factor ($\gamma$) & 0.99 & \textquotedbl\\
 Adam learning rate & 0.0005 & \textquotedbl\\
\end{tabular}
\end{center}
\caption{Parameters for parametric variational distribution}
\end{table}

\section{Derivation of update rules for a Gaussian Policy}

For continuous control we assume that the policy is given by a Gaussian distribution with a full covariance matrix, i.e, 
$
  \pi(\vec a| \vec s,\a) = \mathcal{N}\left(\mu , \vec
  \Sigma \right)
$. Our neural network outputs the mean $\mu=\mu(s)$ and Cholesky factor $A=A(s)$, such that $\Sigma = AA^T$. The lower triagular factor $A$ has positive diagonal elements enforced by the softplus transform $A_{ii} \leftarrow \log(1 + \exp(A_{ii}))$.

\subsection{Non-parametric variational distribution }

In this section we provide the derivations and implementation details for the non-parametric variational distribution case for both E-step and M-step.

\subsection{E-Step}
The E-step with a non-parametric variational solves the following program, where we have replaced expectations with integrals to simplify the following derivations:

\begin{equation*}
  \begin{aligned}
  & \max_q \int \mu_q(s)\int q(a|s) Q_{\theta_i}(s,a) dads\\  
  & s.t. \int \mu_q(s) \textrm{KL}(q(a|s) , \pi(a|s,\a_i) )da < \epsilon ,\\
  & \iint \mu_q(s) q(a|s) dads = 1.
  \end{aligned}
\end{equation*}

First we write the Lagrangian equation, i.e,

\begin{equation*}
  \begin{aligned}
L(q,\eta,\gamma) =& \int \mu_q(s)\int q(a|s) Q_{\a_i}(s,a) dads +\\
&\eta\left(\epsilon - \int \mu_q(s)\int q(a|s)\log \frac{q(a|s)}{\pi(a|s,\a_i)}\right) + \gamma\left(1 - \iint \mu_q(s) q(a|s)dads\right).
  \end{aligned}
\end{equation*}
Next we maximise the Lagrangian $L$ w.r.t the primal variable $q$. The derivative w.r.t $q$ reads,

$${\partial q}L(q,\eta,\gamma) = Q_{\a_i}(a,s) - \eta\log q(a|s) + \eta\log \pi(a|s,\a_i) - (\eta-\gamma).$$

Setting it to zero and rearranging terms we get

$$q(a|s) = \pi(a|s,\a_i)\exp\left(\frac{Q_{\a_i}(a,s)}{\eta}\right)\exp\left(-\frac{\eta-\gamma}{\eta}\right).$$ 
However the last exponential term is a normalisation constant for $q$. Therefore we can write,

$$\exp(-\frac{\eta-\gamma}{\eta}) = \int \pi(a|s, \a_i)\exp(\frac{Q_{\a_i}(a,s)}{\eta})da,$$
$$\gamma = \eta - \eta\log\left(\int \pi(a|s,\a_i)\exp(\frac{Q_{\a_i}(a,s)}{\eta})da\right).$$

Note that we could write $\gamma$ based on $\pi$ and $\eta$. 
At this point we can derive the dual function,
$$g(\eta) = \eta\epsilon+\eta\int\mu_q(s)\log\left(\int \pi(a|s, \a_i)\exp(\frac{Q(a,s)}{\eta})da\right).$$

\subsection{M-step}

To obtain the KL constraint in the M step we set $p(\theta)$ to a Gaussian prior around the current policy, i.e, 
$$p(\a) \approx \mathcal{N}\Big(\mu = \a_i , \Sigma =  \frac{F_{\a_i}}{\lambda}\Big),$$
where $\a_i$ are the parameters of the current policy distribution, $F_{\a_i}$ is the empirical Fisher information matrix and $\lambda$.

With this, and dropping constant terms our optimization program becomes
\begin{equation}
  \max_\pi \int \mu_q(s) \int q(a|s)\log \pi(a|s,\a)dads - \lambda (\a - \a_i)^T F^{-1}_{\a_i}(\a - \a_i).
\end{equation}

We can observe that $(\a - \a_i)^T F^{-1}_{\a_i}(\a - \a_i)$ is the second order Taylor approximation of $\int \mu_q(s) \textrm{KL}(\pi(a|s,\a_i) , \pi(a|s,\a) )ds$ which leads us to the generalized M-step objective: 
\begin{equation}
\max_\pi \int \mu_q(s) \int q(a|s)\log \pi(a|s,\a)dads - \lambda \int \mu_q(s) \textrm{KL}(\pi(a|s,\a_i) , \pi(a|s,\a) ) ds
\end{equation}
which corresponds to Equation \eqref{eq:m_step_soft} from the main text, where expectations are replaced by integrals. 

After obtaining the non parametric variational distribution in the M step with a Gaussian policy we empirically observed that better results could be achieved by decoupling the KL constraint into two terms such that we can constrain the contribution of the mean and covariance separately i.e.

\begin{align}
\int \mu_q(s) \textrm{KL}(\pi_i(a|s,\a) , \pi(a|s,\a)) = C_\mu + C_\Sigma,
\end{align}
where
$$C_{\mu} = \int \mu_q(s) \tfrac{1}{2}(\textrm{tr}(\Sigma^{-1}\Sigma_i) - n + \ln(\frac{\Sigma}{\Sigma_i})) ds, $$
$$C_{\Sigma} = \int \mu_q(s) \tfrac{1}{2}(\mu -\mu_i)^T\Sigma^{-1}(\mu -\mu_i) ds.$$

This decoupling allows us to set different $\epsilon$ values for each component, i.e., $\epsilon_\mu , \epsilon_\Sigma$
for the mean, the covariance matrix respectively.  
Different $\epsilon$ lead to different learning rates. The effectivness of this decoupling has also been shown in \cite{abdolmaleki2017TRCMA}. We always set a much smaller epsilon for covariance than the mean. The intuition is that while we would like the distribution moves fast in the action space, we also want to keep the exploration to avoid premature convergence.   

In order to solve the constrained optimisation in the M-step, 
we first write the generalised Lagrangian equation, i.e,

$$L(\a,\eta_{\mu},\eta_{\Sigma}) = \int\mu_q(s)\int q(a|s) \log \pi(a|s,\a) dads + \eta_{\mu}(\epsilon_{\mu} - C_{\mu}) + \eta_{\Sigma}(\epsilon_{\Sigma} - C_{\Sigma})$$
Where $\eta_{\mu}$ and $\eta_{\Sigma}$ are Lagrangian multipliers.
Following prior work on constraint optimisation, we formulate the following primal problem,

$$\max_{\a}\min_{\eta_{\mu}>0,\eta_{\Sigma}>0} L(\a,\eta_{\mu},\eta_{\Sigma}).$$

In order to solve for $\a$ we iteratively solve the inner and outer optimisation programs independently: We fix the Lagrangian multipliers to their current value and optimise for $\a$ (outer maximisation) and then fix the parameters $\a$ to their current value and optimise for the Lagrangian multipliers (inner minimisation). We continue this procedure until policy parameters $\a$ and Lagrangian multipliers converge. Please note that the same approach can be employed to bound the KL explicitly instead of decoupling the contribution of mean and covariance matrix to the KL.

\subsection{Parametric variational distribution}

In this case we assume our variational distribution also uses a Gaussian distribution over the action space and use the same structure as our policy $\pi$.

Similar to the non-parametric case for a Gaussian distribution in the M-step we also use a decoupled KL but this time in the E-step for a Gaussian variational distribution. Using the same reasoning as in the previous section we can obtain the following generalized Lagrangian equation:

$$L(\a^q,\eta_{\mu},\eta_{\Sigma}) = \int\mu_q(s)\int q(a|s;\a^q) A_i(a,s) dads + \eta_{\mu}(\epsilon_{\mu} - C_{\mu}) + \eta_{\Sigma}(\epsilon_{\Sigma} - C_{\Sigma}).$$
Where $\eta_{\mu}$ and $\eta_{\Sigma}$ are Lagrangian multipliers. And where we use the advantage function $A(a,s)$ instead of the Q function $Q(a,s)$, as it empirically gave better performance.
Please note that the KL in the E-step is different than the one used in the M-step.
Following prior works on constraint optimisation, we can formulate the following primal problem,
% TODO(JTS): put a citation here

$$\max_{\a^q}\min_{\eta_{\mu}>0,\eta_{\Sigma}>0} L(\a^q,\eta_{\mu},\eta_{\Sigma})$$

In order to solve for $\a^q$ we iteratively solve the inner and outer optimisation programs independently. In order to that we fix the Lagrangian multipliers to their current value and optimise for $\a^q$ (outer maximisation), in this case we use the likelihood ratio gradient to compute the gradient w.r.t $\a^q$. Subsequently we fix the parameters $\a^q$ to their current value and optimise for Lagrangian multipliers (inner minimisation). We iteratively continue this procedure until the policy parameters $\a^q$ and the Lagrangian multipliers converges. Please note that the same approach can be used to bound the KL explicitly instead of decoupling the contribution of mean and covariance matrix to the KL. As our policy has the same structure as the parametric variational distribution, the M step in this case reduce to set the policy parameters $\a$ to the parameters $\a^q$ we obtained in E-step, i.e, $$\a_{i+1} = \a^q$$

\section{Implementation details}

While we ran most of our experiments using a single learner, we implemented a scalable variant of the presented method in which multiple workers collect data independently in an instance of the considered environment, compute gradients and send them to a chief (or parameter server) that performs parameter update by averaging gradients. That is we use distributed synchronous gradient descent. These procedures are described in Algorithms \ref{alg:chief} and \ref{alg:MPON} for the non-parametric case and \ref{alg:MPONON} for the parametric case.

\begin{algorithm}
\caption{MPO (chief)}\label{alg:chief}
\begin{algorithmic}[1]
\State Input $G$ number of gradients to average
\While{True}
\State initialize N = 0
\State initialize gradient store $s_\phi = \{ \}$, $s_\eta = \{ \}$, $s_{\eta_{\mu}} = \{ \}$, $s_{\eta_{\Sigma}} = \{ \}$ $s_\theta = \{ \}$
\While{$N < G$}
\State receive next gradient from worker $w$
\State $s_\phi = s_\phi + [\delta\phi^w]$
\State $s_\phi = s_\theta + [\delta\theta^w]$
\State $s_\eta = s_\eta + [\delta\eta^w]$
\State $s_{\eta_\mu} = s_{\eta_\mu} + [\delta\eta_\mu^w]$
\State$s_{\eta_\theta} = s_{\eta_\theta} + [\delta\eta_\theta^w]$
\EndWhile
\State update parameters with average gradient from
\State $s_\phi$, $s_\eta$, $s_{\eta_{\mu}}$, $s_{\eta_{\Sigma}}$ $s_\theta$
\State send new parameters to workers
\EndWhile
\end{algorithmic}
\end{algorithm}

\begin{algorithm}
\caption{MPO (worker) - Non parametric variational distribution}\label{alg:MPON}
\begin{algorithmic}[1]
\State Input $={\epsilon, \epsilon_{\Sigma} , \epsilon_{\mu}, L_{\text{max}}}$
\State $i = 0$, $L_{\text{curr}} = 0$
\State Initialise $Q_{\omega_i}(a,s)$, $\pi(a|s,\a_i)$,
$\eta,\eta_{\mu},\eta_{\Sigma}$

\For{each worker}
    \While{$L_{\text{curr}} > L_\text{max}$}
    \State update replay buffer $\mathcal{B}$ with L trajectories from the environment
    \State $k = 0$
    \State // Find better policy by gradient descent
    \While{$k<1000$}
      \State sample a mini-batch $\mathcal{B}$ of $N$ ($s$, $a$, $r$) pairs from replay
      \State sample $M$ additional actions for each state from $\mathcal{B}$,  $\pi(a|s,\a_i)$ for estimating integrals
      \State compute gradients, estimating integrals using samples
      \State // Q-function gradient:
      \State$\delta_\phi = {\partial}_{\phi} L_\phi'(\phi)$
      \State // E-Step gradient:
      \State  $\delta \eta = {\partial}_\eta g(\eta) $
      \State Let: $q(a|s) \propto \pi(a|s,\a_{i}) \exp(\frac{Q_{\theta_t}(a,s,\phi')}{\eta})$
      \State // M-Step gradient:
      \State $[{\delta_{\eta_{\mu}}},{\delta_{\eta_{\Sigma}}}] = \alpha{\partial}_{\eta_{\mu},\eta_{\Sigma}} L(\a_k,\eta_{\mu},\eta_{\Sigma}) $   
      \State $\delta_{\theta} = {\partial}_{\a} L(\a,{\eta_{\mu}}_{k+1},{\eta_{\Sigma}}_{k+1}) $
      \State send gradients to chief worker
      \State wait for gradient update by chief
      \State fetch new parameters $\phi, \theta, \eta, \eta_\mu, \eta_\Sigma$
      \State $k = k+1$
    \EndWhile
    \State $i= i+1$, $L_{\text{curr}} = L_{\text{curr}} + L$
    \State $ \a_i=\a , \phi'=\phi$
    \EndWhile
\EndFor
\end{algorithmic}
\end{algorithm}

\begin{algorithm}
\caption{MPO (worker) - parametric variational distribution}\label{alg:MPONON}
\begin{algorithmic}[1]
\State Input $={\epsilon_{\Sigma} , \epsilon_{\mu}, L_{\text{max}}}$
\State $i = 0$, $L_{\text{curr}} = 0$
\State Initialise $Q_{\omega_i}(a,s)$, $\pi(a|s,\a_i)$,
$\eta,\eta_{\mu},\eta_{\Sigma}$

\For{each worker}
    \While{$L_{\text{curr}} < L_\text{max}$}
    \State update replay buffer $\mathcal{B}$ with L trajectories from the environment
    \State $k = 0$
    \State // Find better policy by gradient descent
    \While{$k<1000$}
      \State sample a mini-batch $\mathcal{B}$ of $N$ ($s$, $a$, $r$) pairs from replay
      \State sample $M$ additional actions for each state from $\mathcal{B}$,  $\pi(a|s,\a_k)$ for estimating integrals
      \State compute gradients, estimating integrals using samples
      \State // Q-function gradient:
      \State$\delta_\phi = {\partial}_{\phi} L_\phi'(\phi)$
      \State // E-Step gradient:
      \State $[{\delta_{\eta_{\mu}}},{\delta_{\eta_{\Sigma}}}] = \alpha{\partial}_{\eta_{\mu},\eta_{\Sigma}} L(\a_k,\eta_{\mu},\eta_{\Sigma}) $   
      \State $\delta_{\theta} = {\partial}_{\a} L(\a,{\eta_{\mu}}_{k+1},{\eta_{\Sigma}}_{k+1}) $
      \State // M-Step gradient: In practice there is no M-step in this case as policy and variatinal distribution $q$ use a same structure.
      \State send gradients to chief worker
      \State wait for gradient update by chief
      \State fetch new parameters $\phi, \theta, \eta, \eta_\mu, \eta_\Sigma$
      \State $k = k+1$
    \EndWhile
    \State $i= i+1$, $L_{\text{curr}} = L_{\text{curr}} + L$
    \State $ \a_i=\a , \phi'=\phi$
    \EndWhile
\EndFor
\end{algorithmic}
\end{algorithm}

\end{document}